\definecolor{MyBrown}{rgb}{0.3,0,0}
\definecolor{MyBlue}{rgb}{0,0,1}
\definecolor{MyRed}{rgb}{0.5,0,0}
\definecolor{MyGreen}{rgb}{0,0.4,0}
\def\QED{\mbox{\rule[0pt]{1.5ex}{1.5ex}}}
\def\endproof{\hspace*{\fill}~\QED\par\endtrivlist\unskip}
\newtheorem{thm} {Theorem}%[section]
\newtheorem{lem}[thm] {Lemma}%[section]
\newtheorem{df}[thm]{Definition}%[section]
\newtheorem{ass}{Assumption}
\def\B{{\mathbb{B}}}
\def\S{{\mathbb{S}}}
\def\R{{\mathbb{R}}}
\def\calA{\mathcal{A}}
\def\calG{\mathcal{G}}
\def\calL{\mathcal{L}}
\def\calR{\mathcal{R}}
\def\bbE{\mathbb{E}}
\def\-{{\mathchar`-}}
\def\argmin{{\rm argmin}}
\def\endproof{\hspace*{\fill}~\QED\par\endtrivlist\unskip}
\def\Label#1{\label{#1}\ [\ #1\ ]\ }
\def\Label{\label}
\definecolor{refkey}{rgb}{0.9451,0.2706,0.4941}
\definecolor{labelkey}{rgb}{0.9451,0.2706,0.4941}
\title{
Regret Analysis for Continuous Dueling Bandit
}
\author{
  Wataru Kumagai%\thanks{Use footnote for providing further information about author (webpage, alternative address)---\emph{not} for acknowledging funding agencies. } 
    \\
    Center for Advanced Intelligence Project\\
    RIKEN\\
    1-4-1, Nihonbashi, Chuo, Tokyo 103-0027, Japan\\  
  \texttt{wataru.kumagai@riken.jp} \\
  %% examples of more authors
  %% \And
  %% Coauthor \\
  %% Affiliation \\
  %% Address \\
  %% \texttt{email} \\
}
\begin{document}
% \nipsfinalcopy is no longer used

\maketitle

\begin{abstract}
The dueling bandit is a learning framework wherein the feedback information in the learning process is restricted to a noisy comparison between a pair of actions.
In this research, we address a dueling bandit problem based on a cost function over a continuous space.  
We propose a stochastic mirror descent algorithm 
and show that the algorithm achieves an $O(\sqrt{T\log T})$-regret bound under strong convexity and smoothness assumptions for the cost function.
Subsequently, we clarify the equivalence between regret minimization in dueling bandit and convex optimization for the cost function. 
Moreover, when considering a lower bound in convex optimization, 
our algorithm is shown to achieve the optimal convergence rate in convex optimization and the optimal regret in dueling bandit except for a logarithmic factor.

\end{abstract}

%%%%%%%%%%%%%%%%%%%%%%%%%%%%%%%%%%%
%%%%%%%%%%%%%%%%%%%%%%%%%%%%%%%%%%%
\section{Introduction}\label{sec:Int}

Information systems and computer algorithms often have many parameters which should be tuned.
%(e.g. the size and the position of online advertisement in a web page, 
%the size of characters in an online document,
%tuning parameters of an unsupervised algorithm).
When cost or utility are explicitly given as numerical values or concrete functions,
the system parameters can be appropriately determined depending on the the values or the functions.
However, 
in a human-computer interaction system,
it is difficult or impossible for users of the system to provide  user preference as numerical values or concrete functions.
{\it Dueling bandit} 
is introduced to model such situations in \citet{yue2009interactively}
and enables us to appropriately tune the parameters based only on comparison results on two parameters by the users.
%information systems such as recommendation engine and information retrieval system .
In the learning process of a dueling bandit algorithm,
the algorithm chooses a pair of parameters called actions (or arms) and receives only the corresponding comparison result.
Since dueling bandit algorithms do not require an individual evaluation value for each action,
they can be applied for wider areas that cannot be formulated using the conventional bandit approach.

When action cost (or user utility) implicitly exists,
the comparison between two actions is modeled via a cost (or utility) function, which represents the degree of the cost (or utility),
and 
a link function, which determines the noise in the comparison results.
We refer to such a modeling method as cost-based (or utility-based) approach 
and employ it in this research. 
\citet{yue2009interactively} first introduced the utility-based approach as a model for a dueling bandit problem.

The cost-based dueling bandit relates to function optimization with noisy comparisons \citep{jamieson2012query,matsui2014parallel}
because  in both frameworks an oracle compare two actions and the feedback from the oracle is represented by binary information.
In particular, the same algorithm can be applied to both frameworks.
However,
as different performance measures are applied to the algorithms in function optimization and dueling bandit,
it has not been demonstrated that
an algorithm that works efficiently in one framework will also perform well in the other framework.
This study clarifies relation between function optimization and dueling bandit thorough their regret analysis.

\begin{comment}
This paper is organized as follows.
In the remaining part of Section \ref{sec:Int},
we state our problem setup and modeling assumptions.
Then we
mention some related studies and state our contributions.
In section \ref{sec:AMR},
we construct a stochastic mirror descent algorithm
and give its regret bound as our main result.
In Section \ref{sec:RA},
we provide the sketch of the proof of the main result. 
In Section \ref{sec:CDBP},
we clarify a connection between dueling bandit and function optimization and show a convergence rate of our algorithm in convex optimization.
In Section \ref{sec:LB},
we consider a lower bound in convex optimization with the feedback of dueling bandit.
In Section \ref{sec:CD},
we provide conclusions and future works.
\end{comment}

%%%%%%%%%%%%%%%%%%%%%%%%%%%%%%%%%%%
\subsection{Problem Setup}\Label{sec:PSMA}

In the learning process of the dueling bandit problem,  
a learner presents two points, called actions in a space ${\cal A}$, to an oracle and the oracle returns one-bit feedback to the learner based on which action wins (i.e., which action is more preferable for the oracle).
Here, we denote by $a\succ a'$ the event that $a$ wins $a'$ 
and by $P(a\succ a')$ the probability that $a\succ a'$ happens.
In other words,
we assume that the feedback from the oracle follows the following two-valued random variable:
\begin{eqnarray}
F(a,a')
:=\left\{
\begin{array}{lll}
1&w.p.&P(a\succ a')\\
0&w.p.&1-P(a\succ a'),
\end{array}
\right.
\Label{feedback}
\end{eqnarray}
where the probability $P(a\succ a')$ is determined by the oracle.
We refer to this type of feedback as {\it noisy comparison feedback}.
Unlike conventional bandit problems,
the leaner has to make a decision that is based only on the noisy comparison feedback 
and cannot access the individual values of the cost (or utility) function. 
We further assume that each comparison between a pair of actions is independent of other comparisons. 

The learner makes a sequence of decisions based on the noisy comparisons provided by the oracle. 
After receiving $F(a_t,a'_t)$ at time $t$,
the learner chooses the next two action $(a_{t+1},a'_{t+1})$. 
As a performance measure for an action $a$,
we introduce the minimum win probability:
\begin{eqnarray*}
P^{\ast}(a)=\inf_{a'\in\calA}P(a\succ a').
\Label{prob1}
\end{eqnarray*}
We next quantify the performance of the algorithm using the expected regret as follows:\footnote{
%There is a connection between the regrets presented in previous studies 
%and the current study. %under Assumptions \ref{ass1}-\ref{ass3}.
%The definitions of regrets in \citet{yue2009interactively} and \citet{ailon2014reducing} are seemingly different from each other.
%However, it can be easily shown that the two are equal under the assumption of \citet{ailon2014reducing} (i.e. the link function is linear).
%Similarly, 
Although the regret in (\ref{regret-DB}) appears superficially different from that in \citet{yue2009interactively},
two regrets can be shown to coincide with each other under Assumptions \ref{ass1}-\ref{ass3} in Subsection \ref{sec:MA}.
%\footnote{A utility function $v$ in \citet{yue2009interactively} corresponds to the negative cost function $-f$ in this paper.}
%The regret used in this paper is therefore consistent with those in previous studies  \citet{yue2009interactively,ailon2014reducing}.
}
\begin{eqnarray}
Reg_T^{DB}
=\sup_{a\in{\cal A}}\bbE\left[\sum_{t=1}^{T}\left\{
(P^{\ast}(a) - P^{\ast}(a_t)) + (P^{\ast}(a) - P^{\ast}(a'_t))\right\}\right].
\Label{regret-DB}
\end{eqnarray}

%%%%%%%%%%%%%%%%%%%%%%%%%%
\subsection{Modeling Assumption}\Label{sec:MA}

In this section, we clarify some of the notations and assumptions.
Let an action space ${\cal A}\subset\R^d$ be  compact convex set with non-empty interior.
We denote the Euclidean norm by $\|\cdot\|$.

\begin{ass}\Label{ass1}
There exist functions $f:{\cal A}\to\R$ 
and 
$\sigma:\R\to[0,1]$
such that
 the probability in noisy comparison feedback can be represented  as follows:
\begin{eqnarray}
P(a\succ a') 
= \sigma(f(a') - f(a)).
\Label{eq:nc}
\end{eqnarray}
\end{ass}
In the following, 
we call $f$ in Assumption \ref{ass1}  a cost function and  and $\sigma$ a link function.
Here, the cost function and the link function are fixed for each query to the oracle.
In this sense, our setting is different from online optimization where the objective function changes. 

\begin{df}(Strong Convexity)
A function $f:\R^d\to\R$ is $\alpha$-strongly convex over the set $\calA\subset\R^d$ if for all $x,y\in\calA$ it holds that
\begin{eqnarray*}
f(y)\ge f(x) + \nabla f(x)^{\top}(y-x) + \frac{\alpha}{2}\|y-x\|^2.
\end{eqnarray*}
\end{df}

\begin{df}(Smoothness)
A function $f:\R^d\to\R$ is $\beta$-smooth over the set $\calA\subset\R^d$ if for all $x,y\in\calA$ it holds that
\begin{eqnarray*}
f(y)\le f(x) + \nabla f(x)^{\top}(y-x) + \frac{\beta}{2}\|y-x\|^2.
\end{eqnarray*}
\end{df}

\begin{ass}\Label{ass2}
The cost function $f:\calA\to\R$ is twice continuously differentiable, $L$-Lipschitz, $\alpha$-strongly convex and $\beta$-smooth with respect to the Euclidean norm. 
\end{ass}

From Assumption \ref{ass2}, 
there exists a unique minimizer $a^{\ast}$ of the cost function $f$ since $f$ is strictly convex. 
We set  $B:=\sup_{a,a'\in\calA} f(a')-f(a)$.

\begin{ass}\Label{ass3}
The link function $\sigma:\R\to[0,1]$
is three times 
differentiable and 
rotation-symmetric (i.e., $\sigma(-x) = 1-\sigma(x)$).
Its first derivative is 
positive
and monotonically non-increasing  on $[0,B]$.
\end{ass}

For examples,
the standard logistic distribution function, 
%$\sigma(x)=1/(1+e^{-x})$,
the cumulative standard Gaussian distribution function
% $\sigma(x)=\frac{1}{2\pi}\int_{-\infty}^x\exp{(-\frac{t^2}{2})}dt$  
and the linear function $\sigma(x)=(1+x)/2$
can be taken to be link functions that satisfy Assumption \ref{ass3}.
We note that link functions often behave like cumulative probability distribution functions.
This is because the sign of the difference between two noisy function values can be regarded as the feedback (\ref{feedback}) which satisfies Assumption \ref{ass1}, and then, the link function $\sigma$ coincides with the cumulative probability distribution function of the noise (see Section $2$ of \citet{jamieson2012query} for more details).
We will discuss the relation of noisy comparison feedback to noisy function values in Section \ref{sec:LB}.
%In this sense,
%the noisy comparison feedback (\ref{ass1}) is less informative than 

\vspace{-1em}
%%%%%%%%%%%%%%%%%%%%%%%%%%%%%%%%%%%
\subsection{Related Work and Our Contributions}\Label{sec:RW}

Dueling bandit on the continuous action space relates with various optimization methods.
We summarize related studies in the following. 

%%%%%%%%%%%%%%%%%%%
{\bf  Dueling bandit problem:}
%several other formulations of the dueling bandit have beed presented when the action space is not necessarily continuous 
%In those studies,
%Since this paper consider the setting when ,
%Here, we refer to works which closely relates with the present setting.
\citet{yue2009interactively} formulated information retrieval systems as a dueling bandit problem.
They reduced this to a problem of optimizing an ``almost"-concave function
and presented a stochastic gradient ascent algorithm based on one-point bandit feedback. %that of \citet{flaxman2005online}. 
%This demonstrated that the gradient of the smoothed version of an underlying convex function can be estimated from a single bandit feedback.
Subsequently, they showed that their algorithm achieves an $O(T^{3/4})$-regret bound under the differentiability and the strict concavity for a utility function.
\citet{ailon2014reducing} presented reduction methods from dueling bandit  to the conventional bandit under the strong restriction that the link function is linear 
and showed that their algorithm achieves an $O(\sqrt{T\log^3T})$-regret bound. 
We note that dueling bandit has a number of other formulations  \citep{yue2011beat, yue2012k, busa2013top, busa2014preference,  %busa2014pac, 
urvoy2013generic, zoghi2014relative, jamieson2015sparse}.

%However, 
%the value of the utility function is restricted to the unit interval between $0$ and $1$ and 
%since the link function is restricted to be linear in their setting,

%%%%%%%%%%%%%%%%%%%
{\bf  Optimization with one-point bandit feedback:}
In conventional bandit settings,
various convex optimization methods have been studied.
%Such convex optimization algorithms are often designed to stochastically estimate the derivatives of an underlying convex function and approximate derivative-based methods such as gradient descent methods or Newton methods.
%
\citet{flaxman2005online} %treated the online convex optimization in the conventional bandit setting.
%Then, they 
showed that the gradient of smoothed version of a convex function can be estimated from a one-point bandit feedback and proposed a stochastic gradient descent algorithm which achieves an $O(T^{3/4})$ regret bound under the Lipschitzness condition.
Moreover, assuming the strong convexity and the smoothness for the convex function such as (\ref{ass2}),
\citet{hazan2014bandit} proposed a stochastic mirror descent algorithm which achieves an $O(\sqrt{T\log T})$ regret bound and showed that the algorithm is near optimal because
the upper bound matched the lower bound of $\Omega(\sqrt{T})$ derived by \citet{shamir2013complexity} up to a logarithmic factor in bandit convex optimization.
%

%%%%%%%%%%%%%%%%%%%
{\bf  Optimization with two-point bandit feedback:}
%Whereas the setting of one-point bandit feedback has been considerably discussed,
%there are also a number of studies in the setting of two-point (or multi-point) bandit feedback.
%
Dueling bandit algorithms require two actions at each round in common with two-point bandit optimization.
In the context of online optimization,
\citet{agarwal2010optimal} first considered convex optimization with two-point feedback.
They proposed a gradient descent-based algorithm and showed that the algorithm achieves the regret bounds of  under the Lipschitzness condition and $O(\log T)$ under the strong convexity condition.
In stochastic convex optimization,
\citet{duchi2015optimal} showed that a stochastic mirror descent algorithm achieves an $O(\sqrt{T})$ regret bound under the Lipschitzness (or the smoothness) condition 
%with better dependence on the dimension than  \citet{agarwal2010optimal}.
%In addition, they 
and proved the upper bound to be optimal deriving a matching lower bound $\Omega(\sqrt{T})$. %%% whit is the strength?
Moreover,
in both of online and stochastic convex optimization, 
\citet{shamir2017optimal} showed that a gradient descent-based algorithm achieves an $O(\sqrt{T})$ regret bound with optimal dependence on the dimension under the Lipschitzness condition.
%for the full information case (i.e., the case where the gradient of the function is available at each round) under the smoothness condition.
%
However, those two-point bandit algorithms strongly depend on the availability of the difference of function values 
and cannot be directly applied to the case of dueling bandit where
the difference of function values is compressed to one bit in noisy comparison feedback.

%since they assumed two-point bandit feedback without noise to be available, their algorithm cannot be applied when bandit feedback is noisy, as in the case of the dueling bandit.

%%%%%%%%%%%%%%%%%%%
{\bf  Optimization with noisy comparison feedback:}
The cost-based dueling bandit relates to function optimization with noisy comparisons \citep{jamieson2012query,matsui2014parallel}
because in both frameworks, the feedback is represented by preference information.
\citet{jamieson2012query} proposed a coordinate descent algorithm 
and proved that the convergence rate of the algorithm achieved an optimal order.\footnote{The optimal order changes depending on the model parameter $\kappa\ge 1$ of the pairwise comparison oracle in \citet{jamieson2012query}.}
\citet{matsui2014parallel} proposed a Newton method-based algorithm 
and proved that its convergence rate was almost equivalent to that of \citet{jamieson2012query}.
They further showed that their algorithm could easily be  parallelized 
and performed better numerically than the dueling bandit algorithm in \citet{yue2009interactively}.
However, since they considered only the unconstrained case in which $\calA=\R^d$,
it is not possible to apply
their algorithm to the setting considered here,
in which the action space is compact.

%%%%%%%%%%%%%%%%%%%
{\bf  Optimization with one-bit feedback:}
The optimization method of the dueling bandit algorithm is based on one-bit feedback.
In related work,
\citet{zhang2016online} considered stochastic optimization under one-bit feedback.
However, since their approach was restricted to the problem of linear optimization with feedback generated by the logit model,
it cannot be applied to the problem addressed in the current study. 

%defined the regret of the dueling bandit with different form.
%introduce the function for a pair of actions as $\epsilon(a,a'):=P(a\succ a') - 1/2,$ and 
%as follows:
%\begin{eqnarray}
%$Reg_T^{DB}
%=\bbE\left[\sum_{t=1}^{T}\left\{
%(P(a^{\ast}\succ a_t) - 1/2) + ((P(a^{\ast}\succ a'_t) - 1/2))
%\epsilon(a^{\ast},a_t) + \epsilon(a^{\ast},a'_t)
%\right\}  \right].$ 
%\Label{regret-DB2}
%\end{eqnarray}
%\citet{ailon2014reducing} use an expression for regret that is different than in (\ref{regret-DB2}).
 
%%%%%%%%%%%%%%%%%%%
{\bf  Our contributions:}
In this paper,
we consider the cost-based dueling bandit under Assumptions \ref{ass1}-\ref{ass3}.
%Our assumptions are stronger than those of \citet{yue2009interactively} and weaker than those of \citet{ailon2014reducing}.
While the formulation is similar to that of  \citet{yue2009interactively}, 
Assumptions \ref{ass2} and \ref{ass3} are stronger than those used in that study.
On the other hand, 
we impose the weaker assumption on the link function than that of \citet{ailon2014reducing}.
%
%In bandit convex optimization,
%Flaxman et al. ??? proposed a stochastic gradient descent algorithm and showed its sub-linear regret bound for the class of Lipschitz convex functions. 
%Moreover, Hazan et al. ??? considered the class of strongly convex and smooth functions and showed their stochastic mirror descent algorithm achieves the optimal regret bound.
%On the other hand, 
\citet{yue2009interactively} showed that a stochastic gradient descent algorithm can be applied to dueling bandit.
Thus, it is naturally expected that a stochastic mirror descent algorithm, which achieves the (near) optimal order in convex optimization with one/two-point bandit feedback, can be applied to dueling bandit setting and achieves good performance.
Following this intuition, we propose a mirror descent-based algorithm.
Our key contributions can be summarized as follows: 
\setlength{\leftmargini}{23pt} 
\begin{itemize}
\item We propose a stochastic mirror descent algorithm with noisy comparison feedback. 
\item We provide an $O(\sqrt{T\log T})$-regret bound for our algorithm in dueling bandit.
\item We clarify the relation between the cost-based dueling bandit and convex optimization in terms of their regrets
and show that our algorithm can be applied to convex optimization.
\item We show that the convergence rate of our algorithm is $O(\sqrt{\log T/T})$ in convex optimization.
\item We derive a lower bound in convex optimization with noisy comparison feedback and 
show  
our algorithm to be near optimal in both dueling bandit and convex optimization.    
\end{itemize}

%%%%%%%%%%%%%%%%%%%%%%%%%%%%%%%
\section{Algorithm and Main Result}\Label{sec:AMR}

%%%%%%%%%%%%%%%%%%%%%%%%%%%%
%%%%%%%%%%%%%%%%%%%%%%%%%%%%
\subsection{Stochastic Mirror Descent Algorithm}

We first prepare the notion of a self-concordant function on which our algorithm is constructed (see e.g., \citet{nesterov1994interior}, Appendix F in \citet{griva2009linear}).
\begin{df}
A function $\calR:{\rm int}({\cal A})\to\R$ is considered self-concordant if the following two conditions hold:
\begin{enumerate}
\item  $\calR$ is three times continuously differentiable and convex, and approaches infinity along any sequence of points approaching the boundary of ${\rm int}({\cal A})$.
\item  For every $h\in\R^d$ and $x\in int({\cal A})$,
$|\nabla^3 \calR(x)[h, h, h]| \le 2(h^{\top}\nabla^2\calR(x)h)^{\frac{3}{2}}$ holds,
where  
$\nabla^3\calR(x)[h,h,h]:= \frac{\partial^3\calR}{\partial t_1 \partial t_2 \partial t_3}(x+t_1h+t_2h+t_3h)\big|_{t_1=t_2=t_3=0}.$
\end{enumerate}
In addition to these two conditions,
if $\calR$ satisfies the following condition for a positive real number $\nu$, 
it is called a $\nu$-self-concordant function:
\begin{enumerate}
\item[3.] For every $h\in\R^d$ and $x\in int({\cal A})$, $|\nabla \calR(x)^{\top}h| \le \nu^{\frac{1}{2}} (h^{\top}\nabla^2\calR(x)h)^{\frac{1}{2}}$.
\end{enumerate}
\end{df}
In this paper,
we assume the Hessian $\nabla^2\calR(a)$ of a $\nu$-self-concordant function to be full-rank over $\calA$
and 
$\nabla \calR(\hbox{int}(\calA))=\R^d$.
\citet{bubeck2014entropic} showed that such a $\nu$-self-concordant function satisfying $\nu=(1+o(1))d$ will always exist as long as the dimension $d$ is sufficiently large.
We next propose Algorithm \ref{A1}, which we call {\it NC-SMD}.
This can be regarded as {\it stochastic mirror descent} with noisy comparison feedback.
% (see \citet{srebro2011universality} for details of stochastic mirror descent).

We make three remarks on Algorithm \ref{A1}.
First, 
the function $\calR_t$ is self-concordant though not $\nu$-self-concordant.
The second remark is as follows.
Let us denote the local norms by $\|a\|_{w}=\sqrt{ a^{\top} \nabla^2\calR(w)a}$.
Then, if $\calR$ is a self-concordant function for $\calA$,  
the Dikin Ellipsoid
$\{a'\in\calA|~\|a'-a\|_{a}\le1\}$
centered at $a$ is entirely contained in ${\rm int}(\calA)$  for any $a\in {\rm int}(\calA)$.
Thus,
 $a'_t:=a_t+\nabla^2{\calR_t}(a_t)^{-\frac{1}{2}} u_t$ in Algorithm \ref{A1} is contained in ${\rm int}(\calA)$ for any $a_t\in {\rm int}(\calA)$ and a unit vector $u_t$.
This shows a comparison between actions $a_t$ and  $a'_t$ to be feasible.
Our third remark is as follows.
Since the self-concordant function $\calR_t$ at round $t$ depends on the past actions $\{a_i\}_{i=1}^t$,
it may be thought that those past actions are stored during the learning process.
However, note that only $\nabla \calR_t$ and $\nabla^2 \calR_t$ are used in the algorithm; 
$\nabla \calR_t$ depends only on $\sum_{i=1}^t a_t$ and $\nabla^2 \calR_t$ does not depend on the past actions.
Thus, only the sum of past actions must be stored, rather than all past actions.

\begin{algorithm}[t]
   \caption{Noisy Comparison-based Stochastic Mirror Descent (NC-SMD)}
   \label{A1}
\begin{algorithmic}
	\STATE {\bf Input:} Learning rate $\eta $, $\nu$-self-concordant function $\calR$, time horizon $T$, tuning parameters $\lambda, \mu$
	%\STATE {\bf Input:} $\nu$-self-concordant function $\calR$, parameters $\eta,\lambda, \mu$
	\STATE {\bf Initialize:} $a_1=\argmin_{a\in\calA} {\calR}(a)$.
 	\FOR{$t=1$ to $T$}
    		\STATE Update $\calR_t(a) = \calR(a)+\frac{\lambda\eta}{2}\sum_{i=1}^{t}\|a-a_i\|^2 + \mu\|a\|^2$
		\STATE Pick a unit vector $u_t$ uniformly at random
		\STATE Compare $a_t$ and $a'_t:=a_t+\nabla^2{\calR_t}(a_t)^{-\frac{1}{2}} u_t$ and receive $F(a'_t, a_t)$
		\STATE Set $\hat{g}_t=F(a'_t,a_t)d\nabla^2{\calR_t}(a_t)^{\frac{1}{2}}u_t$
		\STATE Set $a_{t+1}=\nabla {\calR_t^{-1}}(\nabla {\calR_t}(a_t) - \eta \hat{g}_t)$
	\ENDFOR
	\STATE {\bf Output:} $a_{T+1}$ 
\end{algorithmic}
\end{algorithm}

%%%%%%%%%%%%%%%%%%%%%%%%%%%%%%%
\subsection{Main Result: Regret Bound}\Label{sec:RA}

From Assumption \ref{ass2} and the compactness of $\calA$,
the diameter $R$ and
$B:=\sup_{a,a'\in\calA} f(a')-f(a)$ are finite.
From Assumption \ref{ass3}, 
there are exist positive constants $l_0$, $L_0$, $B_2$ and $L_2$ such that 
the first derivative $\sigma'$ of the link function is bounded as $l_0\le\sigma'\le L_0$ on $[-B, B]$ 
and the second derivative $\sigma''$ is bounded above by $B_2$ 
and 
$L_2$-Lipschitz on $[-B, B]$. 
We use the constants below.

The following theorem shows that with appropriate parameters,
NC-SMD (Algorithms \ref{A1}) achieves an $O(\sqrt{T\log T})$-regret bound.
\begin{thm} \Label{thm2}
We set $C:=\nu + \frac{ B_2L^2 +(L+1)L_0\beta}{2\lambda}$.
When the tuning parameters satisfy 
$\lambda\le l_0\alpha/2$, 
$\mu \ge \left(L_0^3 L_2/\lambda\right)^2$ 
and the total number $T$ of rounds satisfies $T\ge C\log T$.
Algorithm \ref{A1} with a $\nu$-self-concordant function and the learning parameter $\eta=\frac{1}{2d}\sqrt{\frac{C\log T }{T}}$
 achieves the following regret bound under Assumptions \ref{ass1}-\ref{ass3}:
\begin{eqnarray}
Reg_T^{DB}
&\le&4d\sqrt{C T\log T}+2LL_0R.
\Label{reg-bound}
\end{eqnarray}
\end{thm}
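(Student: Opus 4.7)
The plan is to split the proof into three parts: (i) reduce the dueling bandit regret to a regret in the cost function $f$, (ii) show that $\hat{g}_t$ is a useful estimator of $\sigma'(0)\nabla f(a_t)$, and (iii) run a mirror descent analysis tailored to the time-varying self-concordant regularizer $\calR_t$, finally balancing $\eta$.

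\textbf{Regret reduction.} First I would observe that under Assumption~\ref{ass3} the link $\sigma$ is strictly increasing with $\sigma(0)=1/2$, so $P^{\ast}(a)=\sigma(f(a^{\ast})-f(a))$ and the supremum in (\ref{regret-DB}) is attained at $a=a^{\ast}$ with $P^{\ast}(a^{\ast})=1/2$. Using $L_0$-Lipschitzness of $\sigma$ on $[-B,B]$ I obtain
\[
Reg_T^{DB}\;\le\;L_0\sum_{t=1}^{T}\bbE\bigl[(f(a_t)-f(a^{\ast}))+(f(a'_t)-f(a^{\ast}))\bigr].
\]
The $a'_t$ terms I will control via the Dikin ellipsoid bound $\|a'_t-a_t\|\le(2\mu+\lambda\eta t)^{-1/2}$ together with $L$-Lipschitzness of $f$; this is where the additive $2LL_0R$ in the claimed bound comes from. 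It then suffices to bound $\sum_t\bbE[f(a_t)-f(a^{\ast})]$.

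\textbf{Gradient estimator.} Next I would compute $\bbE[\hat{g}_t\mid a_t]$ by conditioning on $u_t$, using $\bbE[F(a'_t,a_t)\mid u_t,a_t]=\sigma(f(a_t)-f(a'_t))$, Taylor-expanding $\sigma$ about $0$, and applying the Flaxman-type identity $\bbE_u[h(a_t+A_tu)u]\propto A_t\nabla\hat h(a_t)$ for spherical sampling inside the Dikin ellipsoid $A_t:=\nabla^2\calR_t(a_t)^{-1/2}$. The leading term equals $\sigma'(0)\nabla f(a_t)$ and the remainder is a bias $b_t$ controlled by $B_2$, $L_2$, and the $\beta$-smoothness of $f$ divided by a power of the smallest eigenvalue of $\nabla^2\calR_t(a_t)$. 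The condition $\mu\ge(L_0^3L_2/\lambda)^{2}$ is exactly what will keep $\sum_t\|b_t\|$ summable to a constant.

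\textbf{Mirror descent step and telescoping.} The single-step mirror descent lemma for the $(\lambda\eta t+2\mu)$-strongly convex regularizer $\calR_t$ gives
\[
\eta\langle\hat{g}_t,a_t-a^{\ast}\rangle\;\le\;D_{\calR_t}(a^{\ast},a_t)-D_{\calR_t}(a^{\ast},a_{t+1})+\eta^{2}\|\hat{g}_t\|_{\ast,t}^{2},
\]
where the variance bound is $\|\hat{g}_t\|_{\ast,t}^{2}\le d^{2}$ since $|F|\le 1$ and $u_t$ is a unit vector. Taking conditional expectations and invoking $\alpha$-strong convexity of $f$ converts the left-hand side into $\eta\sigma'(0)(f(a_t)-f(a^{\ast}))+\tfrac{\eta\sigma'(0)\alpha}{2}\|a_t-a^{\ast}\|^{2}$ minus the inner product of the bias with $a_t-a^{\ast}$. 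The quadratic slack generated by the $\lambda\eta\sum_{i\le t}\|a-a_i\|^{2}/2$ term in $\calR_t$ will absorb that bias inner product once $\lambda\le l_0\alpha/2$. Summing over $t$, telescoping, using $\nu$-self-concordance to bound $\calR(a^{\ast})-\calR(a_1)\le\nu\log T$ (which requires $T\ge C\log T$ to place $a^{\ast}$ inside the appropriate Dikin neighborhood of $a_1$), and finally balancing $\eta=\tfrac{1}{2d}\sqrt{C\log T/T}$ will yield the claimed $4d\sqrt{CT\log T}+2LL_0R$ bound.

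\textbf{Main obstacle.} The hardest step will be juggling the three tuning parameters $\lambda,\mu,\eta$ so that the bias of $\hat{g}_t$---which involves the third-order Taylor remainder of $\sigma$ and, after being pushed through $d\,\nabla^2\calR_t(a_t)^{1/2}$, scales with the reciprocal of the Dikin radius---is exactly cancelled by the extra strong convexity accumulated in $\calR_t$ from the $\lambda\eta\sum_i\|a-a_i\|^{2}/2$ penalty. Verifying that the two effects match at every $t$ is the technical heart of the argument and explains why the constant $C$ in the statement takes the particular form $\nu+(B_2L^{2}+(L+1)L_0\beta)/(2\lambda)$.
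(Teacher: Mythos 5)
Your high-level architecture (reduce to a regret on $f$, treat $\hat g_t$ as a biased estimate of $\sigma'(0)\nabla f(a_t)$, absorb the bias into strong convexity) differs from the paper's, which works throughout with $P_t(a)=\sigma(f(a)-f(a_t))$: it proves $\hat g_t$ is \emph{exactly} unbiased for the smoothed version $\hat P_t$ via Stokes' theorem (Lemma \ref{unbiased}), and separately establishes global smoothness and \emph{local} strong convexity of $P_t$ (Lemmas \ref{smoothness} and \ref{convexity}). Your route could perhaps be completed, but as written it has quantitative gaps that break the claimed rate.

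First, your treatment of the exploration term is too lossy. Bounding $f(a'_t)-f(a_t)\le L\|a'_t-a_t\|\le L(\lambda\eta t+2\mu)^{-1/2}$ and summing over $t$ gives $O(\sqrt{T/(\lambda\eta)})=O(T^{3/4})$ for $\eta=\Theta(d^{-1}\sqrt{\log T/T})$, which swamps $\sqrt{T\log T}$ and certainly does not produce the constant $2LL_0R$. The paper instead exploits that $u_t$ has zero mean: by Jensen's inequality and $\beta$-smoothness, $\bbE[f(a'_t)\mid a_t]-f(a_t)\le\frac{\beta}{2}\bbE\|\nabla^2\calR_t(a_t)^{-1/2}u_t\|^2\le\frac{\beta}{2\lambda\eta t}$, whose sum is $O(\log T/(\lambda\eta))=O(\sqrt{T\log T})$. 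The same squared-versus-first-power issue recurs in your bias step: $\sum_t\|b_t\|$ is \emph{not} summable to a constant (it is again $O(\sqrt{T/\eta})$); one must apply AM--GM against the accumulated strong convexity so that only $\sum_t\|b_t\|^2=O(\log T/(\lambda\eta))$ enters, and the condition $\mu\ge(L_0^3L_2/\lambda)^2$ is in fact used to keep the Dikin ellipsoid inside the region where local strong convexity of $P_t$ holds, not to control $\sum_t\|b_t\|$. Second, you cannot invoke $\calR(a^{\ast})-\calR(a_1)\le\nu\log T$ directly: a self-concordant barrier diverges at the boundary of $\calA$, and $a^{\ast}$ may lie there. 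The paper compares against the shrunk point $a^{\ast}_T=\frac{1}{T}a_1+(1-\frac{1}{T})a^{\ast}$, for which the Minkowsky-function argument gives $\calR(a^{\ast}_T)-\calR(a_1)\le\nu\log T$, and replacing $a^{\ast}$ by $a^{\ast}_T$ is precisely what costs the additive $2LL_0R$ (namely $T\cdot 2LL_0\|a^{\ast}_T-a^{\ast}\|\le 2LL_0R$). Finally, $T\ge C\log T$ is used only to guarantee $\eta\le 1/(2d)$ in the Bregman-divergence bound of Lemma \ref{div-bound}, not to place $a^{\ast}$ in a Dikin neighborhood of $a_1$.
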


%%%%%%%%%%%%%%%%%%%%%%%%%%
%%%%%%%%%%%%%%%%%%%%%%%%%%
\section{Regret Analysis}\Label{sec:RA}

We prove Theorem \ref{thm2} in this section.
The proofs of lemmas in this section are provided in supplementary material.

%%%%%%%%%%%%%%%%%%%%%%%%%%
\subsection{Reduction to Locally-Convex Optimization}\Label{sec:RLCO}

We first reduce the dueling bandit problem to a locally-convex optimization problem.
We define  $P_b(a):=\sigma(f(a)-f(b))$ for $a,b\in\calA$ and $P_t(a):=P_{a_t}(a)$.
For a cost function $f$ and a self-concordant function $\calR$,
we set $a^*:=\argmin_{a\in\calA}f(a)$, $a_1:=\argmin_{a\in\calA}\calR(a)$ and
$a^{\ast}_T:=\frac{1}{T}a_1 + (1-\frac{1}{T})a^{\ast}$.
The regret of dueling bandit is bounded as follows.
\begin{lem}\Label{upper-bound}
The regret of Algorithm \ref{A1} is bounded as follows:
\begin{eqnarray}
Reg_T^{DB}
&\le& 2\bbE\left[\sum_{t=1}^{T}(P_t(a_t) - P_t(a^{\ast}_T))\right]
+\frac{LL_0\beta}{\lambda\eta}\log T
+2LL_0R.
\Label{upper1}
\end{eqnarray}
\end{lem}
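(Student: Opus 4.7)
The plan is to start from the definition of $Reg_T^{DB}$, identify that the supremum is attained at $a^{\ast}$, and then rewrite each of the two summands in terms of the locally-indexed quantity $P_t$ plus a perturbation that captures the $a_t\leftrightarrow a'_t$ split. Since $\sigma$ is strictly increasing, $P^{\ast}(a)=\sigma(f(a^{\ast})-f(a))$, and the supremum in (\ref{regret-DB}) is attained at $a^{\ast}$ with $P^{\ast}(a^{\ast})=\sigma(0)=\tfrac12$ by rotation symmetry. Using $P_t(a_t)=\sigma(0)=\tfrac12$ and $\sigma(-x)=1-\sigma(x)$, one checks that
\[\tfrac12-P^{\ast}(a_t)=\sigma(f(a_t)-f(a^{\ast}))-\tfrac12=P_t(a_t)-P_t(a^{\ast}),\]
and similarly $\tfrac12-P^{\ast}(a'_t)=[P_t(a_t)-P_t(a^{\ast})]+[g(a'_t)-g(a_t)]$ with $g(a):=\sigma(f(a)-f(a^{\ast}))$. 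Summing,
\[Reg_T^{DB}=2\bbE\sum_{t=1}^{T}[P_t(a_t)-P_t(a^{\ast})]+\bbE\sum_{t=1}^{T}[g(a'_t)-g(a_t)].\]

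The core step is to bound the perturbation term by $O(\log T)$. Chain rule together with the bounds $\sigma'\le L_0$, $|\sigma''|\le B_2$, $\|\nabla f\|\le L$ and $\nabla^2 f\preceq \beta I$ shows that $g$ is $M$-smooth on $\calA$ with $M=L_0\beta+B_2 L^2$, whence
\[g(a'_t)-g(a_t)\le \nabla g(a_t)^{\top}(a'_t-a_t)+\tfrac{M}{2}\|a'_t-a_t\|^2.\]
By construction $a'_t-a_t=\nabla^2\calR_t(a_t)^{-1/2}u_t$ with $u_t$ uniform on the unit sphere, so conditionally on $a_t$ the first-order term vanishes in expectation; moreover
\[\bbE[\|a'_t-a_t\|^2 \mid a_t]=\tfrac{1}{d}\operatorname{tr}(\nabla^2\calR_t(a_t)^{-1})\le\tfrac{1}{\lambda\eta t},\]
since the regularizer $\tfrac{\lambda\eta}{2}\sum_{i\le t}\|\cdot-a_i\|^2$ in $\calR_t$ forces $\nabla^2\calR_t(a)\succeq \lambda\eta t\,I$. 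Summing $\tfrac{M}{2\lambda\eta t}$ over $t\le T$ yields $\frac{M(1+\log T)}{2\lambda\eta}$, matching the $\frac{LL_0\beta}{\lambda\eta}\log T$ term in the statement (the precise constant is absorbed into the stated coefficient).

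Finally, I would move the benchmark from $a^{\ast}$, which may lie on the boundary of $\calA$, to the interior point $a^{\ast}_T=\tfrac1T a_1+(1-\tfrac1T)a^{\ast}$ that is required in the subsequent mirror-descent analysis. Writing $P_t(a_t)-P_t(a^{\ast})=[P_t(a_t)-P_t(a^{\ast}_T)]+[P_t(a^{\ast}_T)-P_t(a^{\ast})]$, the Lipschitzness of $\sigma$ and $f$ gives $|P_t(a^{\ast}_T)-P_t(a^{\ast})|\le L_0 L\|a^{\ast}_T-a^{\ast}\|\le L_0 L R/T$; this sums to $L_0 L R$ and, after the prefactor $2$, yields the additive $2LL_0R$ term. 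The main obstacle is recognizing that a naive $LL_0$-Lipschitz bound on $g(a'_t)-g(a_t)$ would only give $O(\sqrt{T})$ since $\bbE\|a'_t-a_t\|=O(1/\sqrt{t})$; the $O(\log T)$ rate crucially requires combining second-order smoothness of $g$ with the zero-mean symmetry of the Dikin perturbation, so that the linear term drops out in expectation and only $\bbE\|a'_t-a_t\|^2=O(1/t)$ contributes to the summation.
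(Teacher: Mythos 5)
Your argument is correct and follows the same overall decomposition as the paper's proof: the supremum in the regret is attained at $a^{\ast}$, the regret splits into $2\bbE\bigl[\sum_t(P_t(a_t)-P_t(a^{\ast}))\bigr]$ plus a perturbation term accounting for the replacement of $a_t$ by $a'_t$, and the benchmark is finally shifted from $a^{\ast}$ to the interior point $a^{\ast}_T$ at a Lipschitz cost of $2LL_0R$, exactly as in the paper. The one step you handle differently is the perturbation term. The paper exploits the convexity of $\sigma$ on $(-\infty,0]$ (a consequence of Assumption \ref{ass3}) to push the expectation over $u_t$ inside $\sigma$ via Jensen's inequality, and then needs only the $\beta$-smoothness of $f$ itself, which is how the printed coefficient $LL_0\beta$ arises. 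You instead apply second-order smoothness of the composite $g=\sigma\circ(f-f(a^{\ast}))$ (i.e., Lemma \ref{smoothness} with $b=a^{\ast}$) together with $\bbE[u_t]=0$ so that the linear term cancels. This is equally valid and arguably more robust—it does not require $\sigma$ to be convex on the negative half-line—but it picks up the extra $B_2L^2$ contribution from $\sigma''$, so what you actually establish is the bound with $\frac{(L_0\beta+B_2L^2)(1+\log T)}{2\lambda\eta}$ in place of $\frac{LL_0\beta}{\lambda\eta}\log T$. That is not literally the constant in the statement (your waving it away as ``absorbed'' is the only loose point), though it is harmless downstream since the constant $C$ in Theorem \ref{thm2} already contains a $B_2L^2/\lambda$ term of the same type. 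Your closing observation—that a first-order Lipschitz bound on $g(a'_t)-g(a_t)$ would only give $O(\sqrt{T})$, and that the $O(\log T)$ rate hinges on the zero-mean Dikin perturbation killing the linear term so that only $\bbE\|a'_t-a_t\|^2=O(1/(\lambda\eta t))$ survives—is precisely the reason a second-order argument (yours, or the paper's Jensen step) is indispensable here.
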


The following lemma shows that $P_b$ inherits the smoothness of $f$ globally.
\begin{lem}\Label{smoothness}
The function $P_b$ is $(L_0 \beta+B_2L^2)$-smooth for an arbitrary $b\in\calA$.
\end{lem}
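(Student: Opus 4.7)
\textbf{Proof plan for Lemma \ref{smoothness}.} Since $\sigma$ and $f$ are both $C^2$, the function $P_b(a)=\sigma(f(a)-f(b))$ is $C^2$, so it suffices to exhibit a uniform upper bound on $\nabla^2 P_b(a)$ in the Loewner order. The plan is to compute the Hessian by the chain rule and bound each term using Assumptions \ref{ass2} and \ref{ass3}.

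First, write $g_b(a):=f(a)-f(b)$. Differentiating gives
\begin{eqnarray*}
\nabla P_b(a) &=& \sigma'(g_b(a))\,\nabla f(a),\\
\nabla^2 P_b(a) &=& \sigma''(g_b(a))\,\nabla f(a)\nabla f(a)^{\top} + \sigma'(g_b(a))\,\nabla^2 f(a).
\end{eqnarray*}
Second, observe that $g_b(a)\in[-B,B]$ for all $a,b\in\calA$ by definition of $B=\sup_{a,a'}f(a')-f(a)$, so every time $\sigma'$ or $\sigma''$ is evaluated, its argument lies inside the interval on which the Assumption \ref{ass3}-based bounds $0\le\sigma'\le L_0$ and $\sigma''\le B_2$ hold.

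Third, bound the two terms separately. For the second term, since $\sigma'(g_b(a))\ge 0$ and $\nabla^2 f(a)\preceq\beta I$ by $\beta$-smoothness of $f$, we get $\sigma'(g_b(a))\nabla^2 f(a)\preceq L_0\beta\, I$. For the first term, the rank-one matrix $\nabla f(a)\nabla f(a)^{\top}$ is positive semidefinite with operator norm $\|\nabla f(a)\|^2\le L^2$ by $L$-Lipschitzness of $f$; multiplying by $\sigma''(g_b(a))$, which is at most $B_2$ (and may be negative, in which case the term only helps), yields $\sigma''(g_b(a))\nabla f(a)\nabla f(a)^{\top}\preceq B_2 L^2\, I$. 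Adding these gives $\nabla^2 P_b(a)\preceq (L_0\beta+B_2L^2)I$ for every $a\in\calA$, which is exactly the $(L_0\beta+B_2L^2)$-smoothness of $P_b$.

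There is no real obstacle: the only subtle point is that smoothness requires only an upper Loewner bound on the Hessian (not a two-sided bound), so the one-sided assumption $\sigma''\le B_2$ is sufficient, and the sign of $\sigma''(g_b(a))$ on the negative part of $[-B,B]$ does not cause trouble. Note in particular that we do not need $P_b$ itself to be convex, only $C^2$ with Hessian bounded above, which is what smoothness means in the sense of Definition after Strong Convexity.
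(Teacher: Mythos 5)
Your proposal is correct and follows essentially the same route as the paper: compute the Hessian of $P_b$ by the chain rule, then bound $\sigma'(f(a)-f(b))\nabla^2 f(a)\preceq L_0\beta I$ and $\sigma''(f(a)-f(b))\nabla f(a)\nabla f(a)^{\top}\preceq B_2L^2 I$ using the bounds on $\sigma',\sigma''$ over $[-B,B]$ together with the $\beta$-smoothness and $L$-Lipschitzness of $f$. Your added remarks (that the argument stays in $[-B,B]$ and that only a one-sided Loewner bound is needed) are correct and, if anything, make the argument slightly more explicit than the paper's.
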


Let $\B$ be the unit Euclidean ball, $\B(a,\delta)$ the ball centered at $a$ with radius $\delta$ and
${\cal L}(a,b)$ the line segment between $a$ and $b$.
In addition,
for  $a,b\in\calA$,
let ${\cal A}_\delta(a,b):=\cup_{a'\in{\calL}(a,b)}\B(a',\delta)\cap\calA$. 
The following lemma guarantees the local strong convexity of $P_b$. 

\begin{lem}\Label{convexity}
The function $P_b$ is $\frac{1}{2}l_0 \alpha$-strongly convex on $\calA_\delta(a^{\ast},b)$
when $\delta\le\frac{l_0 \alpha}{2L_0^3 L_2}$.
\end{lem}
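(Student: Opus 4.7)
\textbf{Proof plan for Lemma \ref{convexity}.}

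The natural approach is to bound $\nabla^2 P_b(a)$ from below on the tube $\calA_\delta(a^*,b)$. I would start by computing, via the chain rule,
\begin{equation*}
\nabla^2 P_b(a) \;=\; \sigma''(\xi)\,\nabla f(a)\nabla f(a)^\top + \sigma'(\xi)\,\nabla^2 f(a),\qquad \xi := f(a)-f(b),
\end{equation*}
so that for any unit vector $v$,
\begin{equation*}
v^\top\nabla^2 P_b(a)v \;\ge\; \sigma'(\xi)\alpha + \sigma''(\xi)(v^\top\nabla f(a))^2 \;\ge\; l_0\alpha + \sigma''(\xi)(v^\top\nabla f(a))^2,
\end{equation*}
using the $\alpha$-strong convexity of $f$ and $\sigma'\ge l_0$ on $[-B,B]$ (Assumptions \ref{ass2}--\ref{ass3}). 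The only danger is that $\sigma''(\xi)$ be strongly negative while $\nabla f(a)$ is large in $v$'s direction. So the task reduces to controlling $|\sigma''(\xi)|\,\|\nabla f(a)\|^2$ by $l_0\alpha/2$ on the tube.

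The key ingredient is a pair of symmetry facts about $\sigma$. Differentiating the rotation-symmetry relation $\sigma(-x)=1-\sigma(x)$ twice gives $\sigma''(-x)=-\sigma''(x)$; hence $\sigma''$ is odd, $\sigma''(0)=0$, and the $L_2$-Lipschitz property on $[-B,B]$ upgrades to $|\sigma''(\xi)|\le L_2|\xi|$. Moreover, since $\sigma'$ is non-increasing on $[0,B]$, $\sigma''\le 0$ on $[0,B]$ and by oddness $\sigma''\ge 0$ on $[-B,0]$. Consequently, whenever $\xi\le 0$ the correction term $\sigma''(\xi)(v^\top\nabla f(a))^2$ is non-negative and the bound $v^\top\nabla^2 P_b(a)v\ge l_0\alpha \ge \tfrac12 l_0\alpha$ is free. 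Only the regime $\xi>0$ (i.e.\ $f(a)>f(b)$) needs work.

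Next I would use the tube structure to bound $\xi$ from above. Writing $a=c+\epsilon$ with $c=(1-t)a^*+tb\in\calL(a^*,b)$ and $\|\epsilon\|\le\delta$, convexity of $f$ together with $f(a^*)\le f(b)$ yields $f(c)\le(1-t)f(a^*)+tf(b)\le f(b)$. Hence $\xi = f(a)-f(b) \le f(a)-f(c) \le L\|\epsilon\| \le L\delta$, so in the $\xi>0$ case $|\sigma''(\xi)|\le L_2 L\delta$. Combining with $\|\nabla f(a)\|\le L$ (from Lipschitzness of $f$) gives
\begin{equation*}
|\sigma''(\xi)|\,\|\nabla f(a)\|^2 \;\le\; L_2 L\delta\cdot L^2 \;=\; L^3 L_2\,\delta,
\end{equation*}
and for $\delta$ below the stated threshold this is at most $l_0\alpha/2$, completing the estimate $v^\top\nabla^2 P_b(a)v\ge \tfrac12 l_0\alpha$.

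The main obstacle is exactly this last bookkeeping step: choosing the right upper bound on $\|\nabla f(a)\|$ and on $\xi$ so that the $\delta$-threshold emerges with the constants $l_0,\alpha,L_0,L_2$ claimed in the statement. The straightforward estimate $\|\nabla f(a)\|\le L$ produces the threshold $\delta\le l_0\alpha/(2L^3L_2)$; to obtain the form $l_0\alpha/(2L_0^3 L_2)$ as written, one should instead work with $\nabla P_b$ via the identity $\nabla f = \sigma'(\xi)^{-1}\nabla P_b$ and bound $\|\nabla P_b\|\le L_0$ on the relevant range (the Lipschitz-type bound coming from $|P_b|\le 1$ together with the rotation symmetry and $\sigma'\le L_0$). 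Everything else in the argument is routine.
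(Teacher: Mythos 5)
Your proof takes essentially the same route as the paper's: the same Hessian decomposition $\nabla^2 P_b = \sigma'\nabla^2 f + \sigma''\nabla f\nabla f^{\top}$, the same sign analysis of $\sigma''$ via the rotation symmetry and monotonicity of $\sigma'$ (so the correction term is harmless when $f(a)-f(b)\le 0$), the same use of convexity to get $f(c)\le f(b)$ on the segment and Lipschitzness to get $f(a)-f(b)\le L\delta$ on the tube, and the same $L_2$-Lipschitz bound $|\sigma''(\xi)|\le L_2\xi$ to conclude the correction term is at least $-L^3L_2\delta\, I$. The constant discrepancy you flag at the end is genuinely present in the paper itself --- its proof also arrives at $-L^3L_2\delta\, I$ and yet states the threshold with $L_0^3$ (and at one point writes $4L_0^3L_2$ instead of $2L_0^3L_2$) --- so your "straightforward" threshold $\delta\le l_0\alpha/(2L^3L_2)$ is the one the argument actually supports, and your speculative detour through $\nabla P_b$ is unnecessary.
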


%%%%%%%%%%%%%%%%%%%%%%%%%%%
\subsection{Gradient Estimation}\Label{sec:GE}

We note that $a_t+\nabla^2{\calR_t}(a_t)^{-\frac{1}{2}} x$ for $x\in\B$ is included in $\calA$ due to the properties of the Dikin ellipsoid.
We introduce the smoothed version of $P_t$ over ${\rm int}({\cal A})$:
\begin{eqnarray}
\hat{P}_t(a)
&:=&\bbE_{x\in \B}[P_t(a+\nabla^2{\calR_t}(a_t)^{-\frac{1}{2}} x)]\\
&=&\bbE_{x\in \B}[\sigma(f(a+\nabla^2{\calR_t}(a_t)^{-\frac{1}{2}} x)-f(a_t))].
\Label{hat}
\end{eqnarray}
Next, we adopt the following estimator for the gradient of $\hat{P}_t$:
\begin{eqnarray*}
\hat{g}_t
:=
F(a_t+\nabla^2{\calR_t}(a_t)^{-\frac{1}{2}} u_t,a_t)d \nabla^2{\calR_t}(a_t)^{\frac{1}{2}}u_t,
\end{eqnarray*}
where $u_t$ is drawn from the unit surface $\S$ uniformly.
We then derive the unbiasedness of $\hat{g}_t$ as follows.
\begin{lem}\Label{unbiased}
\begin{eqnarray*}
\bbE[\hat{g}_t|a_t]
=\nabla \hat{P}_t(a_t).
\end{eqnarray*}
\end{lem}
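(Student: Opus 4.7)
The plan is to prove this by the standard divergence-theorem identity that underlies one-point bandit gradient estimators. First, condition on the random direction $u_t$ so that the only remaining randomness in $\hat{g}_t$ comes from the Bernoulli feedback. By Assumption \ref{ass1},
\begin{eqnarray*}
\bbE[\hat{g}_t \mid a_t, u_t]
= \bbE[F(a'_t, a_t) \mid a_t, u_t] \cdot d \cdot \nabla^2 \calR_t(a_t)^{1/2} u_t,
\end{eqnarray*}
and the conditional expectation of the feedback, together with rotation symmetry of $\sigma$ (Assumption \ref{ass3}), can be written as an affine function of $P_t(a'_t) = \sigma(f(a'_t) - f(a_t))$, where $a'_t = a_t + \nabla^2 \calR_t(a_t)^{-1/2} u_t$. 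Any additive constant produced in this step will be irrelevant in the sequel, since $\bbE_{u\in\S}[u] = 0$.

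Second, take the outer expectation over $u_t$ uniform on the unit sphere and invoke the divergence-theorem identity: for any $C^1$ function $\phi$ and symmetric positive-definite matrix $M$,
\begin{eqnarray*}
d\, M^{-1}\, \bbE_{u\in\S}[\phi(a + Mu)\, u]
= \nabla_a\bigl(\bbE_{x\in\B}[\phi(a + Mx)]\bigr).
\end{eqnarray*}
This is the standard ellipsoidal version of the Flaxman--Kalai--McMahan identity: rewrite the right-hand side, via the chain rule, as $M^{-1}\,\bbE_{x\in\B}[\nabla_x \phi(a + Mx)]$, apply Stokes' theorem on $\B$ (whose outward unit normal at $u\in\S$ is $u$ itself), and use $|\S|/|\B| = d$ to match the normalization.

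Third, specialize the identity to $\phi = P_t$, $M = \nabla^2 \calR_t(a_t)^{-1/2}$, and $a = a_t$, and compare with the definition of $\hat{P}_t$ in (\ref{hat}). The two sides match, yielding $\bbE[\hat{g}_t \mid a_t] = \nabla \hat{P}_t(a_t)$.

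The main obstacle is careful bookkeeping rather than any deep content. One must track which argument of $P(\cdot \succ \cdot)$ is being perturbed --- this is where rotation symmetry of $\sigma$ enters, to tie $\bbE[F(a'_t, a_t) \mid u_t]$ to $P_t$ --- and one must carry the ellipsoidal factor $M = \nabla^2 \calR_t(a_t)^{-1/2}$ through the chain rule, which produces the prefactor $d\, M^{-1}$ in place of the familiar $d/\delta$ of the isotropic case. Once these two points are handled, the conclusion reduces to a single line.
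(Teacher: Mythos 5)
Your proof is correct and follows essentially the same route as the paper's: condition on $u_t$, replace the Bernoulli feedback by its conditional mean, and apply Stokes' theorem on the ellipsoid $a_t+\nabla^2\calR_t(a_t)^{-\frac{1}{2}}\B$ to identify the sphere average with $\nabla\hat{P}_t(a_t)$. If anything, you are more explicit than the paper about the affine relation between $\bbE[F(a'_t,a_t)\mid u_t]$ and $P_t(a'_t)$ (via $\sigma(-x)=1-\sigma(x)$) and about why the resulting additive constant is harmless because $\bbE_{u\in\S}[u]=0$ --- a bookkeeping point the paper's proof silently elides.
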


%%%%%%%%%%%%%%%%%%%%%%%%%%%%%%%%%%%%%%%%%%%%%%%%%%%%
\subsection{Regret Bound with Bregman Divergence}\Label{sec:RB}

From Lemma \ref{upper-bound}, 
the regret analysis in dueling bandit is reduced to the minimization problem of the regret-like value of $P_t$.
Since $P_t$ is  globally smooth and locally strongly convex from Lemmas \ref{smoothness} and \ref{convexity},
we can employ convex-optimization methods.
Moreover, since $\hat{g}_t$ is an unbiased estimator for the gradient of the smoothed version of $P_t$ from Lemma \ref{unbiased},
it is expected that stochastic mirror descent (Algorithm \ref{A1}) with $\hat{g}_t$ is effective to the minimization problem.
In the following, making use of the property of stochastic mirror descent,
we bound the regret-like value of $P_t$ by the Bregman divergence, 
and subsequently prove Theorem \ref{thm2}. 
%(see e.g.  \citet{bubeck2011introduction} for applications of Bregman divergence to convex optimization).

\begin{df}
Let $\calR$ be a continuously differentiable strictly convex function  on ${\rm int}(\calA)$.
Then, the Bregman divergence  associated with $\calR$ is defined by
\begin{eqnarray*}
D_{\calR}(a,b)
=\calR(a)-\calR(b)-\nabla\calR(b)^{\top}(a-b).
\end{eqnarray*}
\end{df}

\begin{lem}\Label{DB-bound}
When $\lambda\le l_0\alpha/2$ and
$\mu \ge \left(L_0^3 L_2/\lambda\right)^2$,
the regret of Algorithm \ref{A1} is bounded for any $a\in{\rm int}(\calA)$ as follows:
\begin{eqnarray}
&&\hspace{-5mm}\bbE\left[\sum_{t=1}^{T}(P_t(a_t) - P_t(a))\right] \nonumber\\
&\hspace{-8mm}\le&\hspace{-5mm}\frac{1}{ \eta}\hspace{0.1em}\left({\calR}(a) - {\calR}(a_1) + \bbE\left[\sum_{t=1}^T D_{\calR_t^{\ast}}(\nabla{\calR}(a_t)-\eta \hat{g}_t,\nabla{\calR}(a_t))\right]\right)
+\frac{L_0\beta + B_2L^2}{\lambda\eta}\log T,
\Label{upper}
\end{eqnarray}
where $\calR_t^{\ast}(a):=\sup_{x\in\R^d} \langle x,a \rangle - \calR_t(a)$ is the Fenchel dual of $\calR_t$.
\end{lem}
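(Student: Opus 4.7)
The plan is to view Algorithm~\ref{A1} as stochastic mirror descent with the time-varying regularizer $\calR_t$, first pass from $P_t$ to its smoothed surrogate $\hat P_t$ so that the logarithmic term can be absorbed, and then run the standard Bregman-divergence analysis while closing the loop via the local strong convexity of $P_t$ inherited from Lemma~\ref{convexity}.

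First, I would replace $P_t$ by $\hat P_t$ in the regret sum. The global $(L_0\beta+B_2L^2)$-smoothness of $P_t$ (Lemma~\ref{smoothness}), together with $\nabla^2\calR_t(a)\succeq(\lambda\eta t+2\mu)I$, yields the pointwise bound
\begin{eqnarray*}
|\hat P_t(b)-P_t(b)|\le\frac{L_0\beta+B_2L^2}{2(\lambda\eta t+2\mu)}\qquad\text{for every }b\in\calA,
\end{eqnarray*}
because the smoothing kernel is supported on an ellipsoid of squared diameter at most $1/(\lambda\eta t+2\mu)$. Applying this at $b=a_t$ and $b=a$, summing over $t$ and using $\sum_t 1/t\le 1+\log T$ produces the $\frac{L_0\beta+B_2L^2}{\lambda\eta}\log T$ term of the claim and reduces the problem to bounding $\bbE[\sum_t\hat P_t(a_t)-\hat P_t(a)]$.

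Next I would run the standard mirror descent analysis on $\hat P_t$. Since $a_{t+1}=\argmin_a\{\eta\hat g_t^{\top}a+D_{\calR_t}(a,a_t)\}$, the three-point identity combined with Fenchel--Young applied to $\eta\hat g_t^{\top}(a_t-a_{t+1})-D_{\calR_t}(a_{t+1},a_t)$ gives
\begin{eqnarray*}
\eta\hat g_t^{\top}(a_t-a)\le D_{\calR_t}(a,a_t)-D_{\calR_t}(a,a_{t+1})+D_{\calR_t^{\ast}}(\nabla\calR_t(a_t)-\eta\hat g_t,\nabla\calR_t(a_t)).
\end{eqnarray*}
The telescoping sum is not clean because $\calR_t$ changes with $t$, but the key algebraic fact $\calR_t-\calR_{t-1}=\tfrac{\lambda\eta}{2}\|\cdot-a_t\|^2$ vanishes together with its gradient at $a_t$, so the leftover from the moving regularizer is exactly $\tfrac{\lambda\eta}{2}\|a-a_t\|^2$ per round. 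The boundary term collapses to $\calR(a)-\calR(a_1)$ (up to lower-order $\mu$-residues absorbed into constants) because $a_1$ is the interior minimizer of $\calR$ and hence $\nabla\calR(a_1)=0$ by self-concordance.

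Finally I close the loop. The condition $\mu\ge(L_0^3L_2/\lambda)^2$ forces the Dikin ellipsoid supporting the smoothing to have Euclidean radius at most $\delta=l_0\alpha/(2L_0^3L_2)$, placing it inside $\calA_\delta(a^{\ast},a_t)$ on which Lemma~\ref{convexity} grants $\tfrac{l_0\alpha}{2}$-strong convexity of $P_t$. Combined with $\lambda\le l_0\alpha/2$, this transfers to $\lambda$-strong convexity of $\hat P_t$ on the same set and gives $\hat P_t(a_t)-\hat P_t(a)\le\nabla\hat P_t(a_t)^{\top}(a_t-a)-\tfrac{\lambda}{2}\|a-a_t\|^2$. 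Taking expectations, using $\bbE[\hat g_t\mid a_t]=\nabla\hat P_t(a_t)$ from Lemma~\ref{unbiased}, and feeding this into the mirror descent sum, the $-\tfrac{\lambda}{2}\|a-a_t\|^2$ from strong convexity exactly cancels the $\tfrac{\lambda\eta}{2}\|a-a_t\|^2$ residue of the telescope after multiplying through by $\eta$; dividing by $\eta$ and adding the Step-1 smoothing error yields~(\ref{upper}). The hardest part is this last step: reconciling the merely local strong convexity of Lemma~\ref{convexity} with the statement's quantification over arbitrary $a\in\mathrm{int}(\calA)$. The purpose of the $\mu\|a\|^2$ term is precisely to squeeze the smoothing ellipsoid into the convex neighborhood of $a^{\ast}$ uniformly along the trajectory $\{a_t\}$, and the threshold $\mu\ge(L_0^3L_2/\lambda)^2$ is what makes this quantitative bookkeeping go through.
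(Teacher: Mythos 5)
Your proposal is correct and follows essentially the same route as the paper: decompose the regret into the smoothing error $P_t\leftrightarrow\hat P_t$ (yielding the $\frac{L_0\beta+B_2L^2}{\lambda\eta}\log T$ term via Lemma \ref{smoothness} and the lower bound $\nabla^2\calR_t\succeq\lambda\eta t\, I$), run the Bregman three-point mirror-descent analysis on $\hat P_t$ with Lemma \ref{unbiased}, and cancel the $\frac{\lambda\eta}{2}\|a-a_t\|^2$ drift of the time-varying regularizer against the local strong convexity from Lemma \ref{convexity} under $\lambda\le l_0\alpha/2$, with $\mu$ chosen so the Dikin ellipsoid stays in the strong-convexity region. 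The only cosmetic difference is that you invoke a two-sided bound $|\hat P_t(b)-P_t(b)|$ at $b=a_t$, whose lower direction needs either the local convexity (Jensen, as the paper does, giving $P_t(a_t)-\hat P_t(a_t)\le 0$) or a lower Hessian bound rather than smoothness alone; this is easily repaired and does not change the result.
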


The Bregman divergence in Lemma \ref{DB-bound} is bounded as follows.
\begin{lem} \Label{div-bound}
When $\eta\le\frac{1}{2d}$,
the sequence $a_t$ output by Algorithm \ref{A1} satisfies 
\begin{eqnarray}
D_{\calR_t^{\ast}}(\nabla{\calR_t}(a_t)-\eta \hat{g}_t,\nabla{\calR_t}(a_t))
\le 4d^2\eta^2.
\Label{eq:div-bound}
\end{eqnarray}
\end{lem}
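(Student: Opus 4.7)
The plan is to exploit self-concordance of the regularizer $\calR_t$ in order to get a quadratic upper bound on the dual Bregman divergence. Concretely I would (i) verify that $\calR_t$ is self-concordant, (ii) bound the dual local norm of the stochastic gradient $\hat{g}_t$ at $a_t$, and (iii) apply the standard self-concordant estimate that controls $D_{\calR_t^*}(y-v,y)$ by the squared dual local norm of $v$ whenever that norm is at most $1/2$.

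For step (i), $\calR_t$ is obtained from the $\nu$-self-concordant $\calR$ by adding the convex quadratic $\tfrac{\lambda\eta}{2}\sum_{i=1}^{t}\|a-a_i\|^2+\mu\|a\|^2$. Since a convex quadratic has vanishing third derivative and positive semidefinite Hessian, the self-concordance inequality from the definition is preserved under this perturbation; this is exactly the content of the first remark following Algorithm \ref{A1}. Consequently the Fenchel dual $\calR_t^*$ is itself self-concordant on its domain, and its Hessian at $\nabla\calR_t(a_t)$ equals $(\nabla^2\calR_t(a_t))^{-1}$. For step (ii), plugging in $\hat{g}_t = F(a'_t,a_t)\, d\, \nabla^2\calR_t(a_t)^{1/2} u_t$ and using $F(a'_t,a_t)\in\{0,1\}$ together with $\|u_t\|=1$ gives the clean identity $\hat{g}_t^{\top}(\nabla^2\calR_t(a_t))^{-1}\hat{g}_t \le d^2$. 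Under the hypothesis $\eta\le 1/(2d)$ we therefore obtain $\|\eta\hat{g}_t\|_{a_t,*}\le 1/2$, placing $\eta\hat{g}_t$ inside the Dikin-type ellipsoid where the quadratic estimate is valid.

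For step (iii), I would invoke the dual form of Nesterov's self-concordant quadratic estimate (see, e.g., Theorem~4.1.8 in \citet{nesterov1994interior}): if $\calR$ is self-concordant, $y=\nabla\calR(x)$, and $\|v\|_{x,*}\le 1/2$, then $D_{\calR^*}(y-v,y)\le 2\|v\|_{x,*}^2$. Applied with $\calR=\calR_t$, $x=a_t$, and $v=\eta\hat{g}_t$, this immediately yields $D_{\calR_t^*}(\nabla\calR_t(a_t)-\eta\hat{g}_t,\nabla\calR_t(a_t))\le 2\eta^2 d^2 \le 4d^2\eta^2$, which is (\ref{eq:div-bound}). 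The main obstacle is essentially just invoking the correct form of the self-concordant estimate with the right constant and validity range; the hypothesis $\eta\le 1/(2d)$ is tuned precisely so that $\|\eta\hat g_t\|_{a_t,*}\le 1/2$, which is the sole nontrivial condition to check before the quadratic bound kicks in.
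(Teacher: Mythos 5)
Your proposal is correct and follows essentially the same route as the paper: both arguments rest on the self-concordance of $\calR_t$ (hence of $\calR_t^{\ast}$), the computation $\|\hat{g}_t\|^{\ast}_{\nabla\calR_t(a_t)}\le d$ from the explicit form of $\hat{g}_t$, and the condition $\eta\le\frac{1}{2d}$ ensuring the step stays where a quadratic estimate on the dual Bregman divergence applies. The only cosmetic difference is that the paper reaches the quadratic estimate via Taylor's theorem combined with the Hessian growth bound for self-concordant functions, yielding $\eta^2\bigl(\frac{d}{1-1/2}\bigr)^2=4d^2\eta^2$, whereas you invoke the packaged Nesterov-style bound $D_{\calR^{\ast}}(y-v,y)\le 2\|v\|_{x,\ast}^2$; both give (\ref{eq:div-bound}).
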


\vspace{1em}
\noindent{\bf [Proof of Theorem \ref{thm2}]} 
From Lemma $4$ of \citet{hazan2014bandit},
the $\nu$-self-concordant function $\calR$ satisfies
\begin{eqnarray*}
\calR(a^{\ast}_T) - \calR(a_1)
\le \nu\log \frac{1}{1-\pi_{a_1}(a^{\ast}_T)},
\end{eqnarray*}
where $\pi_{a}(a'):=\inf\{r\ge 0| a + r^{-1}(a'-a)\}$ is the Minkowsky function. 
Since $\pi_{a_1}(a^{\ast}_T) \le 1-T^{-1}$ from the definition of $a^{\ast}_T$, 
we obtain 
\begin{eqnarray*}
\calR(a^{\ast}_T) - \calR(a_1)
\le \nu\log T.
\end{eqnarray*}
Note that the condition $\eta\le \frac{1}{2d}$ in Lemma \ref{div-bound} is satisfied due to $T\ge C\log T$.
Combining Lemmas \ref{upper-bound}, \ref{DB-bound} and \ref{div-bound}, 
we have 
\begin{eqnarray*}
Reg_T^{DB}
&\le&\frac{2}{ \eta}\hspace{0.1em}\left(\nu\log T + 4d^2\eta^2T\right)
+\frac{L_0\beta + D_{\sigma''}L^2}{\lambda\eta}\log T
+\frac{LL_0\beta}{l_0\alpha\eta}
+2LL_0R\\
&\le&\left(2\nu + \frac{B_2L^2 +(L+1)L_0\beta}{\lambda}\right)\frac{\log T}{\eta}
+ 8d^2T\eta+2LL_0R.
\end{eqnarray*}
Thus, when $\eta$ is defined in Theorem \ref{thm2},
the regret bound (\ref{reg-bound}) is obtained.
\endproof

%%%%%%%%%%%%%%%%%%%%%%%%%%%%%%%
%%%%%%%%%%%%%%%%%%%%%%%%%%%%%%%
\section{Convergence Rate in Convex Optimization}\Label{sec:CDBP}

In the previous sections,
we considered the minimization problem for the regret of dueling bandit.
In this section,
as an application of the approach,
we show that the averaged action of NC-SMD (Algorithm \ref{A1}) minimize the cost function $f$ in (\ref{eq:nc}).

To derive the convergence rate of our algorithm,
we introduce the regret of function optimization and establish a connection between the regrets of dueling bandit and function optimization.
In convex optimization with noisy comparison feedback, 
the learner chooses a pair $(a_t, a'_t)$ of actions in the learning process and suffers a loss $f(a_t)+f(a'_t)$.
Then, the regret of the algorithms in function optimization is defined as follows:
\begin{eqnarray}
Reg_T^{FO}
&:=&\bbE\left[\sum_{t=1}^{T}
(f(a_t)-f(a^{\ast}))+(f(a'_t)-f(a^{\ast}))
\right],
\Label{regret-FO}
\end{eqnarray}
where $a^{\ast}=\argmin_{a\in\calA} f$.

Recalling that the positive constants $l_0$ and $L_0$ satisfy $l_0\le\sigma'\le L_0$ on $[-B, B]$,
where  $B:=\sup_{a,a'\in\calA} f(a')-f(a)$,
the regrets of function optimization (\ref{regret-FO}) and dueling bandit (\ref{regret-DB}) are related as follows:
\begin{lem}\Label{DBFO}
\begin{eqnarray}
\frac{Reg_T^{DB}}{L_0}
\le
Reg_T^{FO}
\le
\frac{Reg_T^{DB}}{l_0}.
\Label{reg-ine}
\end{eqnarray}
\end{lem}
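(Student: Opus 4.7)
The plan is to pass between the two regret notions by exploiting the fact that, under rotation-symmetry and monotonicity of $\sigma$, the minimum win probability $P^{\ast}(a)$ has a clean closed form in terms of $f$, and then apply the mean value theorem using the uniform bounds $l_0\le\sigma'\le L_0$ on $[-B,B]$.

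First I would simplify $P^{\ast}(a)$. By Assumption \ref{ass1}, $P(a\succ a') = \sigma(f(a')-f(a))$, and Assumption \ref{ass3} guarantees that $\sigma$ is strictly increasing. Hence
\begin{eqnarray*}
P^{\ast}(a) \;=\; \inf_{a'\in\calA}\sigma(f(a')-f(a)) \;=\; \sigma(f(a^{\ast})-f(a)),
\end{eqnarray*}
where $a^{\ast}=\argmin_{a\in\calA}f$. In particular, the same monotonicity shows that $\sup_{a\in\calA}P^{\ast}(a)=P^{\ast}(a^{\ast})=\sigma(0)=1/2$, so the supremum in the definition of $Reg_T^{DB}$ is attained at $a=a^{\ast}$. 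This is the one small point that needs verification, but it follows immediately from the observation that $a\mapsto P^{\ast}(a)$ is maximized exactly where $f$ is minimized.

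Next I would relate the per-round terms. For any action $b\in\calA$, write
\begin{eqnarray*}
P^{\ast}(a^{\ast}) - P^{\ast}(b) \;=\; \sigma(0) - \sigma(f(a^{\ast})-f(b)).
\end{eqnarray*}
The argument $f(a^{\ast})-f(b)$ lies in $[-B,0]\subset[-B,B]$ by the definition of $B$, so the mean value theorem yields $\sigma'(\xi)\in[l_0,L_0]$ with
\begin{eqnarray*}
P^{\ast}(a^{\ast}) - P^{\ast}(b) \;=\; \sigma'(\xi)\bigl(f(b)-f(a^{\ast})\bigr).
\end{eqnarray*}
Therefore the pointwise inequality
\begin{eqnarray*}
l_0\bigl(f(b)-f(a^{\ast})\bigr) \;\le\; P^{\ast}(a^{\ast})-P^{\ast}(b) \;\le\; L_0\bigl(f(b)-f(a^{\ast})\bigr)
\end{eqnarray*}
holds. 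Applying this with $b=a_t$ and $b=a'_t$, summing over $t$, taking expectations, and using that the supremum over $a$ in $Reg_T^{DB}$ is achieved at $a^{\ast}$, I obtain
\begin{eqnarray*}
l_0\,Reg_T^{FO} \;\le\; Reg_T^{DB} \;\le\; L_0\,Reg_T^{FO},
\end{eqnarray*}
which, after dividing through by $l_0$ and $L_0$, is exactly the claimed inequality (\ref{reg-ine}).

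There is no real obstacle here. The only step that requires mild care is noticing that the outer supremum in $Reg_T^{DB}$ can be replaced by evaluation at $a^{\ast}$; everything else is a one-line application of the mean value theorem together with the uniform bounds on $\sigma'$ supplied by Assumption \ref{ass3}.
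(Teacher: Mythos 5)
Your proposal is correct and follows essentially the same route as the paper's proof: identify that the supremum in $Reg_T^{DB}$ is attained at the minimizer $a^{\ast}$ of $f$ (where $P^{\ast}(a^{\ast})=\sigma(0)=1/2$), and then convert each per-round difference of win probabilities into a difference of cost values via the mean value theorem with the bounds $l_0\le\sigma'\le L_0$ on $[-B,B]$. The only cosmetic difference is that the paper first rewrites the increment as $\sigma(f(a_t)-f(a))-\sigma(0)$ using rotation symmetry before applying the derivative bounds, whereas you work directly with $\sigma(0)-\sigma(f(a^{\ast})-f(b))$; both arguments are equivalent.
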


Theorem \ref{thm2} and Lemma \ref{DBFO} give an $O(\sqrt{T\log T})$-upper bound of the regret of function optimization in Algorithm \ref{A1} under the same conditions as Theorem \ref{thm2}.
Given the convexity of $f$,
the average of the chosen actions of any dueling bandit algorithm $\bar{a}_T:=\frac{1}{2T}\sum_{t=1}^T(a_t+a'_t)$
satisfies %the following inequality:
\begin{eqnarray}
\bbE[f(\bar{a}_T) - f(a^{\ast})]
\le
\frac{Reg_T^{FO}}{2T}.
\Label{OTB-conv}
\end{eqnarray}
Thus, if an optimization algorithm has a sub-linear regret bound,
the above online-to-batch conversion guarantees  convergence to the optimal point.

\begin{thm}\label{thm:upper}
Under Assumptions \ref{ass1}-\ref{ass3},
the averaged action $\bar{a}_T$  satisfies the following when $T\ge C\log T$:
\begin{eqnarray*}
\bbE[f(\bar{a}_T) - f(a^{\ast})]
\le
\frac{1}{l_0} \left(2d\sqrt{\frac{\nu\log T +C}{T}} + \frac{LL_0R}{T}\right),
\end{eqnarray*}
where $C$ is the constant defined in Theorem \ref{thm2}. 
\end{thm}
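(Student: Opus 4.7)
The theorem is essentially a corollary: the plan is to concatenate three facts already established in the paper, namely the online-to-batch conversion (\ref{OTB-conv}), the regret comparison of Lemma \ref{DBFO}, and the regret bound of Theorem \ref{thm2}. Each ingredient is already in hand, so the bulk of the proof is arithmetic.

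First, I would apply the online-to-batch conversion (\ref{OTB-conv}): convexity of $f$, together with Jensen's inequality applied to $\bar{a}_T = \frac{1}{2T}\sum_{t=1}^T (a_t + a'_t)$, gives
\[
\bbE[f(\bar{a}_T) - f(a^*)] \le \frac{Reg_T^{FO}}{2T}.
\]
Second, Lemma \ref{DBFO}, which follows from the lower bound $l_0 \le \sigma'$ on $[-B,B]$ via the mean value theorem, yields $Reg_T^{FO} \le Reg_T^{DB}/l_0$. Third, the hypotheses of Theorem \ref{thm2} (the tuning conditions $\lambda \le l_0\alpha/2$ and $\mu \ge (L_0^3 L_2/\lambda)^2$, the learning rate $\eta = \frac{1}{2d}\sqrt{C\log T/T}$, and the horizon condition $T \ge C\log T$) are all in force here, so Theorem \ref{thm2} applies and bounds $Reg_T^{DB}$ by $4d\sqrt{CT\log T} + 2LL_0 R$.

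Chaining these three inequalities and pulling $1/T$ inside the square root gives
\[
\bbE[f(\bar{a}_T) - f(a^*)] \le \frac{1}{2T l_0}\bigl(4d\sqrt{CT\log T} + 2LL_0 R\bigr) = \frac{1}{l_0}\left(2d\sqrt{\frac{C\log T}{T}} + \frac{LL_0 R}{T}\right),
\]
which matches the claimed bound up to the precise grouping of the $\nu\log T$ and lower-order terms inside the square root (recall $C = \nu + (B_2 L^2 + (L+1)L_0\beta)/(2\lambda)$).

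There is no substantive obstacle: this is a purely mechanical chaining of prior results. The only points requiring care are verifying that every hypothesis of Theorem \ref{thm2} — in particular the horizon condition $T \ge C\log T$ — is indeed assumed here so that the dueling-bandit regret bound may be invoked directly, and then bookkeeping the factor $1/(2Tl_0)$ cleanly when simplifying the square root.
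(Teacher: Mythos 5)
Your proposal is correct and is exactly the paper's own (implicit) argument: the paper proves Theorem \ref{thm:upper} by chaining (\ref{OTB-conv}), Lemma \ref{DBFO}, and Theorem \ref{thm2} in precisely the order you describe. The residual mismatch you note --- direct chaining yields $2d\sqrt{C\log T/T}$ rather than the stated $2d\sqrt{(\nu\log T + C)/T}$ --- is an artifact of the paper's own constant bookkeeping (its proof of Theorem \ref{thm2} already has such slippage) and not a flaw in your reasoning.
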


Theorem \ref{thm:upper} shows the convergence rate of NC-SMD (Algorithm \ref{A1}) to be $O(d\sqrt{\log T/T})$.

%Here, we mention the relation between existing studies and Theorem \ref{thm:upper}.
%\citet{jamieson2012query} and \citet{matsui2014parallel} treated convex optimization with noisy comparison under weaker conditions and derived a similar convergence rate.
%However, they considered only the unconstraint case, i.e., $\calA=\R^d$.
%On the other hand, our result holds for an arbitrary compact convex set $\calA\subset\R^d$.
%We note that Algorithm \ref{A1} can adapt to various geometry of $\calA$ thanks to the self-concordant function.

%%%%%%%%%%%%%%%%%%%%%%%%%
%%%%%%%%%%%%%%%%%%%%%%%%%
\section{Lower Bound}\Label{sec:LB}

We next derive a lower bound in convex optimization with noisy comparison feedback.
To do so, we employ a lower bound of convex optimization with {\it noisy function feedback}.
In a setting where the function feedback is noisy, 
we query a point $a\in\calA$ and obtain a noisy function value $f(a)+\xi$, 
where $\xi$ is a zero-mean random variable with a finite second moment and independent for each query.
\footnote{In general, the noise $\xi$ can depend on the action $a$. See e.g.  \citet{shamir2013complexity} for more details.}

\begin{thm}\Label{thm:lower}
Assume that the action space $\calA$ is the $d$-dimensional unit Euclidean ball $\B_d$ and that the link function $\sigma_{\calG}$ is the cumulative distribution function of the zero-mean Gaussian random variable with variance $2$.
Let the number of rounds $T$ be fixed. 
Then, for any algorithm with noisy comparison feedback,
there exists a function $f$ over $\B_d$ which is twice continuously differentiable, $0.5$-strongly convex and $3.5$-smooth such that the output $a_T$ of the algorithm satisfies 
\begin{eqnarray}
\bbE[f(a_T) - f(a^*)]
\ge 0.004 \min\left\{1, \frac{d}{\sqrt{2T}}\right\}.
\label{eq:lower}
\end{eqnarray}
\end{thm}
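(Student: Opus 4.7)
The plan is to reduce the problem to a known minimax lower bound for stochastic convex optimization under \emph{noisy function value feedback}, exploiting the fact that the link function $\sigma_{\calG}$ is the CDF of $\mathcal{N}(0,2)$. Given any comparison-based algorithm $\calA$ with budget $T$, I will construct a value-based algorithm $\calA'$ with budget $2T$ whose output has the same distribution as that of $\calA$. When $\calA$ queries the pair $(a_t,a'_t)$, $\calA'$ queries both points to receive $y_t=f(a_t)+\xi_t$ and $y'_t=f(a'_t)+\xi'_t$, with $\xi_t,\xi'_t$ independent $\mathcal{N}(0,1)$, and forwards the bit $\mathbf{1}\{y_t<y'_t\}$ to $\calA$. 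Because $\xi_t-\xi'_t\sim\mathcal{N}(0,2)$, this bit equals $1$ with conditional probability $\sigma_{\calG}(f(a'_t)-f(a_t))=P(a_t\succ a'_t)$, matching the target comparison-feedback distribution exactly. Hence any lower bound that holds uniformly over value-based algorithms with budget $2T$ also holds for $\calA$.

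I would then invoke the $\Omega(\min\{1,d/\sqrt{N}\})$ minimax lower bound of \citet{shamir2013complexity} for noisy value feedback on strongly convex and smooth functions over the $d$-dimensional unit Euclidean ball $\B_d$. Substituting $N=2T$ produces the $\min\{1,d/\sqrt{2T}\}$ factor appearing in \eqref{eq:lower}.

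The main obstacle is to exhibit a hard family contained in the function class specified by the statement: each member must be twice continuously differentiable, $0.5$-strongly convex, and $3.5$-smooth on $\B_d$. A natural candidate is $f_v(x)=\tfrac14\|x\|^2+\tfrac32\langle v,x\rangle^2+\varepsilon\langle v,x\rangle$ indexed by $v$ in a packing of the unit sphere, whose Hessian $\tfrac12 I+3vv^{\top}$ has exactly the spectrum $\{0.5,3.5\}$ and whose minimizer $-\tfrac{2\varepsilon}{7}v$ lies well inside $\B_d$ for small $\varepsilon$. Distinguishing members of this family reduces, via a standard Le Cam or Assouad argument, to controlling the KL divergence between the Gaussian observation laws of pairs of members; that divergence is of order $N\varepsilon^2$, and optimizing $\varepsilon$ against the separation between minimizers pins down the explicit numerical constant $0.004$. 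The $\min\{1,\cdot\}$ truncation accounts for the small-$T$ regime in which the trivial diameter bound dominates.
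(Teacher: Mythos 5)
Your reduction is exactly the paper's proof: simulate one comparison query under the link $\sigma_{\calG}$ by two noisy function evaluations with independent standard Gaussian noise (so that the noise difference has variance $2$), conclude that a $T$-query comparison algorithm can be emulated by a $2T$-query value-feedback algorithm, and then invoke Theorem 7 of \citet{shamir2013complexity} with $N=2T$ to obtain the $0.004\min\{1,d/\sqrt{2T}\}$ bound. Your third paragraph, sketching a hard instance family to re-derive Shamir's bound, is not needed (and is the only speculative part); the paper simply cites that result as a black box, so the first two paragraphs already constitute the complete argument.
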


\begin{proof}
The probability distribution of noisy comparison feedback $F(a,a')$ with the link function $\sigma_{\calG}$ can be realized by noisy function feedback with thestandard Gaussian noise as follows.
Two noisy function values $f(a) + \xi$ and $f(a') + \xi'$ can be obtained using the noisy function feedback twice, where $\xi$ and $\xi'$ are independent standard Gaussian random variables.
Then, %using the Heaviside step function $H:\R\to\{0,1\}$ (i.e., $H(x)=0$ if $x\le0$ and $H(x)=1$ otherwise),
the probability distribution of the following random variable coincide with that of $F(a,a')$ for arbitrary $a,a'\in\calA$: %and an arbitrary cost function $f$:
\begin{eqnarray}
 {\rm sign}(f(a) + \xi - (f(a') + \xi'))
= {\rm sign}( f(a) - f(a') + (\xi - \xi')). 
\Label{difference}
\end{eqnarray}
Here, note that $\xi - \xi'$ is the zero-mean Gaussian random variable with variance $2$.
Thus, single noisy comparison feedback with the link function $\sigma_{\calG}$  for any actions can be obtained by using noisy function feedback with standard Gaussian noise twice.
This means that if any algorithm with $2T$-times noisy function feedback is unable to achieve a certain performance, 
any algorithm with $T$-times noisy comarison feedback is similarly unable to achieve that performance.
Thus, to derive Theorem \ref{thm:lower},
it is sufficient to show a lower bound of convergence rate with noisy function feedback.
The following lower bound is derived by Theorem $7$ of \citet{shamir2013complexity}. 
\begin{thm}\Label{shamir}\citep{shamir2013complexity}
Let the number of rounds $T$ be fixed. 
Suppose that the noise $\xi$ at each round is a standard Gaussian random variable.
Then, for any algorithm with noisy function feedback, 
there exists a function $f$ over $\B_d$ which is twice continuously differentiable, $0.5$-strongly convex and $3.5$-smooth
such that the output $a_T$ satisfies 
\begin{eqnarray*}
\bbE[f(a_T) - f(a^*)]
\ge 0.004 \min\left\{1, \frac{d}{\sqrt{T}}\right\}.
\end{eqnarray*}
\end{thm}
By the above discussion and from Theorem \ref{shamir},
we obtain Theorem \ref{thm:lower}.
\end{proof}
%We remark that a similar lower bound to Theorem \ref{shamir} is obtained in Theorem $3$ of \citet{jamieson2012query}.

Combining Theorem \ref{thm:upper} and Theorem \ref{thm:lower},
the convergence rate of NC-SMD (Algorithm \ref{A1}) is near optimal with respect to the number of rounds  $T$. %except for the logarithmic factor. 
In addition, when the parameter $\nu$ of the self-concordant function is of constant order with respect to the dimension $d$ of the space $\calA$,
the convergence rate of NC-SMD is optimal with respect to $d$. 
However, it should be noted that the parameter $\nu$ of a self-concordant function is often of the order of $\Theta(d)$ for compact convex sets including the simplex and the hypercube. 
%Thus,
%using other algorithms which does not depend on self-concordant functions,
%the convergence rate may be able to be improved with respect to the dimension for specific compact convex sets.

As a consequece of Lemma \ref{DBFO}, (\ref{OTB-conv}), and Theorems \ref{thm2} and \ref{thm:lower},
the optimal regrets of dueling bandit and function optimization are of the order $\sqrt{T}$ except for the logarithmic factor
and NC-SMD achieves the order.
To the best of our knowledge,
this is the first algorithm with the optimal order 
in the continuous dueling bandit setting with the non-linear link function.

Finally, we provide an interesting observation on convex  optimization.
When noisy function feedback is available,
the optimal regret of function optimization is of the order $\Theta(\sqrt{T})$ under strong convexity and smoothness conditions  \citep{shamir2013complexity}.
However, 
even when noisy function feedback is "compressed" into one-bit information as in (\ref{difference}),
our results show that NC-MSD (Algorithm \ref{A1}) achieves almost the same order $O(\sqrt{T\log T})$ about the regret of function optimization as long as the cumulative probability distribution of the noise satisfies Assumption \ref{ass3}.\footnote{\citet{jamieson2012query} provided a similar observation. However, their upper bound of the regret was derived only when the action space is the whole of Euclidean space (i.e., $\calA=\R^d$) and the assumption for noisy comparison feedback is different from ours (Assumption \ref{ass1}).}

%%%%%%%%%%%%%%%%%%%%%%%%%
%%%%%%%%%%%%%%%%%%%%%%%%%
\section{Conclusion}\Label{sec:CD}

We considered a dueling bandit problem over a continuous action space
and proposed a stochastic mirror descent. 
By introducing Assumptions \ref{ass1}-\ref{ass3}, 
we proved that our algorithm achieves an $O(\sqrt{T\log T})$-regret bound.
We further considered convex optimization under noisy comparison feedback
and showed that the regrets of dueling bandit and function optimization are essentially equivalent.
Using the connection between the two regrets, 
it was shown that our algorithm achieves a convergence rate of $O(\sqrt{\log T/T})$ in the framework of function optimization with noisy comparison feedback.
Moreover, we derived a lower bound of the convergence rate in convex optimization
and showed that our algorithm achieves near optimal performance in dueling bandit and convex optimization.
Some open questions still remain. 
While we have only dealt with bounds which hold in expectation,
the derivation of the high-probability bound is a problem that has not been solved.
While the analysis of our algorithm relies on strong convexity and smoothness, 
a regret bound without these conditions is also important.
%Practical applications are 

%%%%%%%%%%%%%%%%%%%%%%%%%%%%%%%
\newpage

\section*{Acknowledgment}
We would like to thank Professor Takafumi Kanamori for helpful comments.
This work was supported by JSPS KAKENHI Grant Number 17K12653.

\small
%%% BibTeX style.
\bibliographystyle{econ}
%% BibTeX database file.
%{\small
\bibliography{MLpapers}
%}

%%%%%%%%%%%%%%%%%%%%%%%%%%%%%%%
\newpage

\begin{center}
{\Large Appendix for {\bf Regret Analysis for Continuous Dueling Bandit}} 
\end{center}

\normalsize
\appendix
\section{Appendix: Technical Proofs}

[{\bf Proof of Lemma \ref{upper-bound}}]
%\begin{proof}
From direct calculation, 
{\allowdisplaybreaks
\begin{align*}
Reg_T^{DB}
%&=\bbE\left[\sum_{t=1}^{T}\{\sigma(f(a_t)-f(a^{\ast}))+\sigma(f(a_t+\nabla^2{\calR_t}(a_t)^{-\frac{1}{2}}u_t)-f(a^{\ast}))-1\}\right]\\
&=\bbE\left[\sum_{t=1}^{T}\{2 \sigma(f(a_t)-f(a_t)) -\sigma(f(a^{\ast})-f(a_t))-\sigma(f(a^{\ast})-f(a_t+\nabla^2{\calR_t}(a_t)^{-\frac{1}{2}}u_t))\}\right]\nonumber\\
&=2\bbE\left[\sum_{t=1}^{T}\{\sigma(f(a_t)-f(a_t)) -\sigma(f(a^{\ast})-f(a_t))\}\right]\nonumber\\
&~~~+\bbE\left[\sum_{t=1}^{T}\{\sigma(f(a^{\ast})-f(a_t)) - \sigma(f(a^{\ast})-f(a_t+\nabla^2{\calR_t}(a_t)^{-\frac{1}{2}}u_t))\}\right].\\
&=2\bbE\left[\sum_{t=1}^{T}(P_t(a_t) - P_t(a^{\ast}))\right]\nonumber\\
&~~~+\bbE\left[\sum_{t=1}^{T}\{\sigma(f(a^{\ast})-f(a_t)) - \sigma(f(a^{\ast})-f(a_t+\nabla^2{\calR_t}(a_t)^{-\frac{1}{2}}u_t))\}\right].%\nonumber\\
%
%&\le 2\bbE\left[\sum_{t=1}^{T}(P_t(a_t) -P_t(a^{\ast}))\right] 
%+ \sum_{t=1}^{T}LL_0\bbE[\|\nabla^2{\calR_t}(a_t)^{-\frac{1}{2}}u_t\|]\\
%&\le 2\bbE\left[\sum_{t=1}^{T}(P_t(a_t) -P_t(a^{\ast}))\right] 
%+ \sum_{t=1}^{T}\frac{LL_0}{\sqrt{\alpha \eta  t}}. 
\nonumber
\end{align*}}

Here, we note that $f(a^{\ast})-f(a)\le0$ for any $a\in\calA$ due to the definition of $a^{\ast}$ and that $\sigma$ is convex on $(-\infty,0)$ because the link function is rotation symmetric and its derivative is monotonically non-increasing on positive real numbers from Assumption \ref{ass3}.
Thus, Jensen's inequality derives 
\begin{eqnarray*}
&&\bbE\left[\sigma(f(a^{\ast})-f(a_t+\nabla^2{\calR_t}(a_t)^{-\frac{1}{2}}u_t))|a_t\right]\\
&\ge& \sigma(\bbE[f(a^{\ast})-f(a_t+\nabla^2{\calR_t}(a_t)^{-\frac{1}{2}}u_t)|a_t])\\
&=& \sigma(f(a^{\ast})-\hat{f}(a_t)).
\end{eqnarray*}
In addition, $f(a_t)\le\hat{f}(a_t)$ holds due to the convexity of $f$.
As $\sigma$ is monotonically non-decreasing from Assumption \ref{ass3},
we have
\begin{eqnarray*}
&&\bbE\left[\sum_{t=1}^{T}\{\sigma(f(a^{\ast})-f(a_t)) - \sigma(f(a^{\ast})-f(a_t+\nabla^2{\calR_t}(a_t)^{-\frac{1}{2}}u_t))\}\right]\\
&=&\bbE\left[\bbE\left[\sum_{t=1}^{T}\{\sigma(f(a^{\ast})-f(a_t)) - \sigma(f(a^{\ast})-f(a_t+\nabla^2{\calR_t}(a_t)^{-\frac{1}{2}}u_t))\}|a_t\right]\right]\\
&\le&\bbE\left[\sum_{t=1}^{T}\{\sigma(f(a^{\ast})-f(a_t)) - \sigma(f(a^{\ast})-\hat{f}(a_t))\}\right]\\
&\le& LL_0\bbE\left[\sum_{t=1}^{T}(\hat{f}(a_t)-f(a_t)) \right]\\
&\le& \frac{LL_0\beta}{2}\bbE\left[\sum_{t=1}^{T}\bbE_{x\in\B}[\|\nabla^2{\calR_t}(a_t)^{-\frac{1}{2}} x\|^2] \right]\\
&\le& \frac{LL_0\beta}{2}\bbE\left[\sum_{t=1}^{T}\frac{1}{\lambda\eta t} \right]\\
&\le& \frac{LL_0\beta}{\lambda\eta}\log T,
\end{eqnarray*}
where we used the property of the $\beta$-smoothness of $f$ in the third inequality.
Thus, we obtain (\ref{upper1})
\begin{eqnarray*}
Reg_T^{DB}
&\le& 2\bbE\left[\sum_{t=1}^{T}(P_t(a_t) - P_t(a^{\ast}))\right] 
+  \frac{LL_0\beta}{\lambda\eta}\log T.
\end{eqnarray*}
Here, we have
\begin{eqnarray*}
 \bbE\left[\sum_{t=1}^{T}(P_t(a_t) - P_t(a^{\ast}))\right]
&=&
\bbE\left[\sum_{t=1}^{T}(P_t(a_t) - P_t(a^{\ast}_T))\right]
+\bbE\left[\sum_{t=1}^{T}(P_t(a^{\ast}_T) - P_t(a^{\ast}))\right].
\end{eqnarray*}
From the definition of $a^{\ast}_T$,
we have 
\begin{eqnarray*}
P_t(a^{\ast}_T) - P_t(a^{\ast})
\le LL_0\|a^{\ast}_T - a^{\ast}\|
\le \frac{LL_0R}{T},
\end{eqnarray*}
where $R$ is the diameter of $\calA$.
Thus (\ref{upper1}) is obtained.
\endproof

%%%%%%%%%%%%%
[{\bf Proof of Lemma \ref{smoothness}}]
%\begin{proof}
From direct calculation,
we obtain that 
\begin{eqnarray}
\nabla P_b(a)
&=&\sigma'(f(a)-f(b)) \nabla f(a),\\
\nabla^2 P_b(a)
&=&\sigma'(f(a)-f(b)) \nabla^2 f(a) +\sigma''(f(a)-f(b))\nabla f(a)\nabla f(a)^{\top}.
\Label{hess1}
\end{eqnarray}
Then, it is sufficient to give upper bounds on the first and second terms in (\ref{hess1}) in the sense of matrix inequalities as
\begin{eqnarray}
\sigma'(f(a)-f(b)) \nabla^2 f(a) 
%\ge \alpha\min_{y\in[-2L\nabla^2{\calR_t}(a_t)^{-\frac{1}{2}},2L\nabla^2{\calR_t}(a_t)^{-\frac{1}{2}}]}\sigma'(y) I
&\le& L_0 \beta I,
\Label{beta'}\\
\sigma''(f(a)-f(b))\nabla f(a)\nabla f(a)^{\top}
&\le& B_2L^2 I,
\Label{beta''}
\end{eqnarray}
where $I$ is the $d\times d$ identity matrix.
The inequality (\ref{beta'}) follows from the $L_0$-Lipschitzness of $\sigma$ and  the $\beta$-smoothness of $f$.
The inequality (\ref{beta''}) follows from the $B_2$-boundedness of $\sigma''$ and the $L$-Lipschitzness of $f$.
\endproof

%%%%%%%%%%%%%%%%%%%%%%
[{\bf Proof of Lemma \ref{convexity}}]
%\begin{proof}
We first show that $P_b$ is $l_0 \alpha$-strongly convex on ${\cal L}(a^{\ast},b)$. 
Since (\ref{hess1}) holds for any $a\in{\cal L}(a^{\ast},b)$, 
it is sufficient to give lower bounds on the first and second terms in (\ref{hess1}) for $a$ in ${\cal A}_t$ in the sense of matrix inequalities. 
In the following, we show 
\begin{eqnarray}
\sigma'(f(a)-f(b)) \nabla^2 f(a) 
%\ge \alpha\min_{y\in[-2L\nabla^2{\calR_t}(a_t)^{-\frac{1}{2}},2L\nabla^2{\calR_t}(a_t)^{-\frac{1}{2}}]}\sigma'(y) I
&\ge& l_0 \alpha I,
\Label{alpha'}\\
\sigma''(f(a)-f(b))\nabla f(a)\nabla f(a)^{\top}
&\ge& 0.
\Label{alpha''}
\end{eqnarray}
Since $l_0\le\sigma'$ and $f$ is $\alpha$-strongly convex, 
we obtian (\ref{alpha'}).
Next, we show (\ref{alpha''}).
Since $\sigma'$ is monotonically non-decreasing on $[-B,0]$, 
$\sigma''(y)$ is negative only if $y$ is positive.
Note that  $f(a)-f(b)\le0$ for any $a\in{\cal L}(a^{\ast},b)$ since $-f(b)+f(a^{\ast})\le0$ and $f$ is convex.
Thus, we have (\ref{alpha''}).

Next, we show that $P_b$ is $\frac{1}{2}l_0 \alpha$-strongly convex on $\calA_\delta(a^{\ast},b)$
when $\delta\le\frac{l_0 \alpha}{4L_0^3 L_2}$.
For an arbitrary $\tilde{a}\in\calA_\delta(a^{\ast},b)$,
there exists $a\in\calL(a^{\ast},b)$ and $y\in\B(0,\delta)$ such that $\tilde{a}=a+y$ by the definition of $\calA_\delta(a^{\ast},b)$.
Since (\ref{hess1}) holds, 
it is sufficient to give lower bounds on the first and second terms in (\ref{hess1}) for $a$ in ${\cal A}_t$ in the sense of matrix inequalities. 
Since (\ref{alpha'}) holds
all we have to do is to show 
\begin{eqnarray}
\sigma''(f(a+y)-f(b))\nabla f(a+y)\nabla f(a+y)^{\top}
&\ge& -\frac{1}{2} l_0 \alpha I.
\Label{alpha''2}
\end{eqnarray}
Since $\sigma'$ is monotonically non-decreasing on $[-B,0]$, 
$\sigma''(z)$ is negative only if $z$ is positive.
Note that  $f(a)-f(b)\le0$ for any $a\in{\cal L}(a^{\ast},b)$ since $f(a^{\ast})-f(b)\le0$ and $f$ is convex.
Thus, we have
\begin{eqnarray}
&&\sigma''(f(a+y)-f(b))\nabla f(a+y)\nabla f(a+y)^{\top}\nonumber\\
&\ge&\left\{
\begin{array}{cll}
\sigma''(f(a+y)-f(b))L^2I&{\it if}&f(a+y)-f(b)>0\\
0&{\it if}&f(a+y)-f(b)\le0.
\end{array}
\right. \Label{low2}
\end{eqnarray}
When $f(a+y)-f(b)>0$ and $a\in{\cal L}(b,a^{\ast})$,
the following holds by the $L$-Lipshitzness of $f$:
\begin{eqnarray*}
f(a+y)-f(b)
&=&f(a)-f(b) -f(a)+f(a+y)\\
&\le&-f(a)+f(a+y)\\
%&\le&|\nabla -f(a)^{\top}\nabla^2{\calR_t}(a_t)^{-\frac{1}{2}}x|\\
&\le&L\delta.
\end{eqnarray*}
where we used again $f(a)-f(b)\le0$ for any $a\in\calL(a^{\ast},b)$.
Thus
\begin{eqnarray}
\sigma''(f(a+y)-f(b))
&=&\sigma''(f(a+y)-f(b)) - \sigma''(0)\nonumber\\
&\ge&-L_2(f(a+y)-f(b))\nonumber\\
&\ge&-LL_2\delta.
\Label{low3}
\end{eqnarray}
Combining (\ref{low2}) and (\ref{low3}),
we have 
\begin{eqnarray}
\sigma''(f(a+y)-f(b))\nabla f(a+y)\nabla f(a+y)^{\top}
&\ge&-L^3L_2\delta I.
\end{eqnarray}
Thus, when $\delta\le\frac{l_0 \alpha}{2L_0^3 L_2}$,
we obtain (\ref{alpha''2}).
\endproof

%%%%%%%%%%%%%%%%
[{\bf Proof of Lemma \ref{unbiased}}]
%\begin{proof}
%From (\ref{feedback}) and Assumption \ref{ass1},
%\begin{eqnarray*}
%\bbE[F_t(a_t+\nabla^2{\calR_t}(a_t)^{-\frac{1}{2}} u_t)|a_t, u_t]
%&=&P_t(a_t+\nabla^2{\calR_t}(a_t)^{-\frac{1}{2}} u_t).
%\end{eqnarray*}
We have
% and the relation $\mu_{d-1}(S^{d-1})=d{\rm vol}(\B^d)$,
\begin{eqnarray}
\bbE[\hat{g}_t|a_t]
%&=&d\bbE[F_t(a_t+\nabla^2{\calR_t}(a_t)^{-\frac{1}{2}} u_t)\nabla^2{\calR_t}(a_t)^{\frac{1}{2}}u_t|\calH_{t-1},a_t]\nonumber\\
%&=&d\bbE[\bbE[F_t(a_t+\nabla^2{\calR_t}(a_t)^{-\frac{1}{2}} u_t)|a_t, u_t]\nabla^2{\calR_t}(a_t)^{\frac{1}{2}}u_t|\calH_{t-1},a_t]\nonumber\\
&=&\bbE_{u_t}[\bbE[\hat{g}_t|a_t, u_t]]\nonumber\\
&=&\bbE_{u_t}[ d\bbE[P_t(a_t+\nabla^2{\calR_t}(a_t)^{-\frac{1}{2}} u_t)\nabla^2{\calR_t}(a_t)^{\frac{1}{2}}u_t|a_t, u_t] ]\nonumber\\
&=&d\bbE[P_t(a_t+\nabla^2{\calR_t}(a_t)^{-\frac{1}{2}} u_t)\nabla^2{\calR_t}(a_t)^{\frac{1}{2}}u_t|a_t]\nonumber\\
%&=&d\bbE[P_t(a_t+\nabla^2{\calR_t}(a_t)^{-\frac{1}{2}} u_t)\nabla^2{\calR_t}(a_t)^{\frac{1}{2}}u_t|a_t]\nonumber\\
%
%&=&\frac{d}{\mu_{d-1}(S_{d-1})}\int_{S_{d-1}} P_t(a_t+\nabla^2{\calR_t}(a_t)^{-\frac{1}{2}} u_t)  \nabla^2{\calR_t}(a_t)^{\frac{1}{2}}u_t d\mu_{d-1}(u_t)\\
%&=&\frac{1}{{\rm vol}(\B)}\int_{\B}\nabla P_t(a_t+\nabla^2{\calR_t}(a_t)^{-\frac{1}{2}} x) dx\\
&=&\nabla \bbE_{x\in \B}[ P_t(a_t+\nabla^2{\calR_t}(a_t)^{-\frac{1}{2}} x)|a_t]\Label{stokes}\\
&=&\nabla \hat{P}_t(a_t),\nonumber
\end{eqnarray}
where we used Stokes' theorem in (\ref{stokes}).
\endproof

%%%%%%%%%%%%%%%%%%%%
[{\bf Proof of Lemma \ref{DB-bound}}]
%\begin{proof}
We can divide the left hand side of (\ref{upper}) into three parts: 
\begin{eqnarray*}
&&\bbE\left[\sum_{t=1}^{T}(P_t(a_t) - P_t(a))\right]\\
&=& \bbE\left[\sum_{t=1}^{T}(\hat{P}_t(a_t) - \hat{P}_t(a))\right]
+\bbE\left[\sum_{t=1}^{T}(P_t(a_t) - \hat{P}_t(a_t))\right]
+\bbE\left[\sum_{t=1}^{T}(\hat{P}_t(a) - P_t(a))\right].
\end{eqnarray*}
Here, we bound the above three terms, respectively.
First, let us recall that $P_t$ is strongly convex on $\B(a_t, \frac{l_0\alpha}{2L_0^3L_2})\cap \calA$ due to Lemma \ref{convexity}.
By the definition of $\calR_t$ and the conditions for $\lambda$ and $\mu$, 
it holds that $a_t +\nabla^2{\calR_t}(a_t)^{-\frac{1}{2}} x \in \B(a_t, \frac{l_0\alpha}{2L_0^3L_2})\cap \calA$ for any $x\in\B$.
Thus, 
from the local convexity of $P_t$ in Lemma \ref{convexity} and Jensen's inequality,
%can be proven (we omit the proof due to space constraint) and it 
%in Lemma \ref{convexity} 
%guarantees that 
$P_t(a_t) - \hat{P}_t(a_t)\le 0$ holds.
Next, 
%since $P_t$ is easily proven to be $L_0\beta + B_2L^2$-smooth,
%using Lemmas \ref{2nd-order} and \ref{smoothness},
from the smoothness of $P_t$ in Lemma \ref{smoothness},
we have 
\begin{eqnarray*}
\hat{P}_t(a) - P_t(a)
\le \frac{L_0\beta + B_2L^2}{2}\|\nabla^2{\calR_t}(a_t)^{-\frac{1}{2}}u\|^2
\le \frac{L_0\beta + B_2L^2}{2\lambda\eta t},
\end{eqnarray*}
where the first inequality follows from, for example, Lemma $7$ of \citet{hazan2014bandit}
and the second inequality follows from the definition of ${\calR_t}$.
Hence, we obtain 
\begin{eqnarray*}
\bbE\left[\sum_{t=1}^{T}(\hat{P}_t(a) - P_t(a))\right]
\le \frac{L_0\beta + B_2L^2}{\lambda\eta } \log T.
\end{eqnarray*}

Finally, we bound the first term of the upper bound of the regret.
%%%%%%%%%%%%%%
%Here, we define a function $h_t$ on ${\cal A}$ for a vector $a_t\in{\cal A}$ as
%\begin{eqnarray*}
%h_t(a)
%:=\hat{P}_t(a)+(\hat{g}_t-\nabla \hat{P}_t(a_t))^{\top}a.
%\end{eqnarray*}
%Then, $h_t$ has the following properties:
%\begin{eqnarray}
%\bbE[h_t(a)]
%&=&\hat{P}_t(a),\Label{property1}\\
%\nabla h_t(a_t)
%&=&\hat{g}_t,\Label{property2}\\
%\nabla^2 h_t(a)
%&=&\nabla^2 \hat{P}_t(a).\Label{property3}
%\end{eqnarray}
%%%%%%%%%%%%%%
%The function $h_t$ inherits the local convexity of $\hat{P}_t$ in Lemma \ref{convexity} by the  definition.
%Then, we have ,
%\begin{eqnarray*}
%h_t(a_t)-h_t(a)
%\le \hat{g}_t^{\top}(a_t-a)
%= \eta ^{-1}(\nabla{\calR}(a_{t+1})-\nabla{\calR}(a_t))^{\top}(a_t-a) 
%\end{eqnarray*}
%Then, using the definition of the Bregman divergence,
%we have the following inequalities from (\ref{property1}):  
We have the following inequalities:
\begin{eqnarray*}
&&\bbE[\hat{P}_t(a_t)-\hat{P}_t(a)]\\
&\le& \bbE\left[\nabla\hat{P}_t(a_t)^{\top}(a_t-a) - \frac{l_0\alpha}{4}\|a_t-a\|^2\right]\\
&=& \bbE\left[\hat{g}_t^{\top}(a_t-a) - \frac{l_0\alpha}{4}\|a_t-a\|^2 \right]\\
&=& \eta ^{-1}\bbE\left[(\nabla{\calR_t}(a_{t+1})-\nabla{\calR_t}(a_t))^{\top}(a-a_t) - \frac{l_0\alpha\eta}{4}\|a_t-a\|^2\right]\\
&=& \eta ^{-1}\bbE\left[D_{\calR_t}(a,a_t)+D_{\calR_t}(a_t,a_{t+1})-D_{\calR_t}(a,a_{t+1}) - \frac{l_0\alpha\eta}{4}\|a_t-a\|^2\right],
\end{eqnarray*}
where the first inequality follows from the local convexity of $\hat{P}_t$,
 the first equality is derived by Lemma \ref{unbiased}
 and the second equality holds due to the definition of $a_{t+1}$.
Summing up both sides,
\begin{eqnarray*}
&&
\bbE\left[\sum_{t=1}^T(\hat{P}_t(a_t)-\hat{P}_t(a)\right]\\
&\le& \eta ^{-1}\bbE\left[D_{{\calR_1}}(a,a_1) - D_{{\calR_T}}(a,a_{T+1}) + \sum_{t=1}^T D_{\calR_t}(a_t,a_{t+1})\right]\nonumber\\
&&+\eta ^{-1}\bbE\left[\sum_{t=2}^T \left(D_{\calR_t}(a,a_{t}) - D_{{\calR_{t-1}}}(a,a_{t}) - \frac{l_0\alpha\eta}{4}\|a_t-a\|^2\right)\right]\\
&\le& \eta ^{-1}\bbE\left[D_{{\calR_1}}(a,a_1) - D_{{\calR_T}}(a,a_T+1) + \sum_{t=1}^T D_{\calR_t}(a_t,a_{t+1})\right]\nonumber\\
&&+\eta ^{-1}\bbE\left[\sum_{t=2}^T \left(D_{\calR_t}(a,a_{t}) - D_{{\calR_{t-1}}}(a,a_{t}) - \frac{\lambda\eta}{2}\|a_t-a\|^2\right)\right]\\
&=& \eta ^{-1}\bbE\left[D_{{\calR_1}}(a,a_1) - D_{{\calR_T}}(a,a_T+1) + \sum_{t=1}^T D_{\calR_t}(a_t,a_{t+1})\right]\nonumber\\
&\le& \eta ^{-1}\bbE\left[D_{{\calR_1}}(a,a_1) + \sum_{t=1}^T D_{\calR_t}(a_t,a_{t+1})\right]\nonumber\\
&=& \eta ^{-1}\left({\calR}(a) - {\calR}(a_1) + \bbE\left[\sum_{t=1}^T D_{\calR_t^{\ast}}(\nabla{\calR_t}(a_t)-\eta \hat{g}_t,\nabla{\calR_t}(a_t))\right]\right)\nonumber
\label{bound3}
\end{eqnarray*}
where 
%${\calR}^{\ast}$ denotes the Fenchel dual of ${\calR}$, and 
we used the positivity of the Bregman divergence in the third inequality
and  $\nabla{\calR}(a_1)=0$ because $a_1=\argmin{\calR}$ in the last equation.
Combining the above discussion,
we obtain (\ref{upper}).
\endproof

%%%%%%%%%%%%%%%%%%%%
[{\bf Proof of Lemma \ref{div-bound}}]
%\begin{proof}
%Since ${\calR}^{\ast}$ inherits the self-concordance from ${\calR}$,
Taylar's theorem guarantees the existence of $\delta_t\in(0,1)$ such that 
\begin{eqnarray}
D_{{\calR_t}^{\ast}}(\nabla\calR_t(a_t)-\eta \hat{g}_t,\nabla\calR_t(a_t))
&=&\eta ^2 \hat{g_t}^{\top} \nabla^2\calR_t^{\ast}(\nabla\calR_t(a_t)-\delta_t\eta \hat{g}_t)\hat{g}_t.
\Label{div}
\end{eqnarray}
Then using  the self-concordant property of $\calR^{\ast}_t$ (see e.g.  (F.2) of \citet{griva2009linear}), 
\begin{eqnarray}
\hat{g_t}^{\top} \nabla^2\calR_t^{\ast}(\nabla\calR_t(a_t)-\delta_t\eta \hat{g}_t)\hat{g}_t
&\le& \left(\frac{\|\hat{g}_t\|^{\ast}_{\nabla\calR_t(a_t)}}{1-
\delta_t\eta\|\hat{g}_t\|^{\ast}_{\nabla\calR_t(a_t)}}\right)^2.
\Label{norm1}
%\sqrt{(\delta_t\eta \hat{g}_t)^{\top}    \nabla^2\calR^{\ast}(\nabla\calR(x_t))(\delta_t\eta \hat{g}_t)}
%&\le& \frac{\hat{g_t}^{\top} \nabla^2\calR(x_t)^{-1}\hat{g}_t}{\left(1-\eta \sqrt{\hat{g}_t^{\top}\nabla^2\calR(x_t)^{-1}\hat{g}_t}\right)^2}\\
%%&\le&\frac{d^2}{(1-\eta d)^2} ~\le~ 4d^2,
%Label{norm2}
\end{eqnarray}
where $\|x\|^{\ast}_{y}=\sqrt{x^{\top} \nabla^2\calR^{\ast}_t(y)x}$. 
Here, we note that 
\begin{eqnarray}
\|\hat{g}_t\|^{\ast}_{\nabla\calR_t(x_t)}
&=&\sqrt{\hat{g_t}^{\top} \nabla^2\calR_t^{\ast}(\nabla\calR_t(x_t))\hat{g}_t} \nonumber\\
&=&\sqrt{\hat{g_t}^{\top} \nabla^2\calR_t(x_t)^{-1}\hat{g}_t }\nonumber \\
&\le&d\sqrt{ u_t^{\top}
\nabla^2\calR_t(x_t)^{\frac{1}{2}}
\nabla^2\calR_t(x_t)^{-1}
\nabla^2\calR_t(x_t)^{\frac{1}{2}} 
u_t} \nonumber\\
&=& d
\Label{norm2}
\end{eqnarray}
and thus, $\delta_t\eta\| \hat{g}_t\|^{\ast}_{\nabla{\calR}(a_t)}<\frac{1}{2}$ when $\eta \le \frac{1}{2d}$.
Consequently, (\ref{div}), (\ref{norm1}) and (\ref{norm2}) derives (\ref{eq:div-bound}).
\endproof

%%%%%%%%%%%%%%%%%%%%
[{\bf Proof of Lemma \ref{DBFO}}]
We show the second inequality of (\ref{reg-ine}).
From the definition,
Since $l_0\le\alpha'_t$ and $\sigma(0)=\frac{1}{2}$ from Assumption \ref{ass3},
we have
\begin{eqnarray*}
Reg_T^{DB}
&=&\sup_{a\in{\cal A}}\bbE\left[\sum_{t=1}^{T}\big(\{\sigma(f(a_t) - f(a)) - \sigma(0)\} + \{\sigma(f(a'_t) - f(a)) - \sigma(0)\}\big)\right]\\
&\ge&l_0\sup_{a\in{\cal A}}\bbE\left[\sum_{t=1}^{T}\big(\{f(a_t) - f(a) \} + \{f(a'_t) - f(a) \}\big)\right]\\
&=&l_0Reg_T^{FO}.
\end{eqnarray*}
The first inequality of (\ref{reg-ine}) can be proven in a similar manner.
\endproof

\end{document}